\let\NAT@parse\undefined
\def\prob{\mathbb{P}}
\def\expt{\mathbb{E}}
\def\real{\mathbb{R}}
\def\natural{\mathbb{N}}
\newcommand{\until}[1]{\{1,\dots, #1\}}
\newcommand{\subscr}[2]{#1_{\textup{#2}}}
\newcommand{\setdef}[2]{\{#1 \; | \; #2\}}
\newcommand{\seqdef}[2]{\{#1\}_{#2}}
\newcommand{\map}[3]{#1: #2 \rightarrow #3}
\newcommand{\indicator}[1]{\mathbf{1}\left\{#1\right\}}
\newcommand{\ceil}[1]{\left\lceil #1 \right\rceil}
\newcommand{\diag}[1]{\text{diag} \left\{#1\right\}}
\DeclareMathOperator*{\argmax}{arg\,max}
\newcommand\oprocendsymbol{\hbox{$\square$}}
\newcommand\oprocend{\relax\ifmmode\else\unskip\hfill\fi\oprocendsymbol}
\newcommand\bit[1]{\textit{\textbf{#1}}}
\def \bs {\boldsymbol}
\def \mc {\mathcal}
\def \etal {\emph{et al.}}
\newtheorem{theorem}{Theorem}
\newtheorem{lemma}[theorem]{Lemma}
\newtheorem{corollary}[theorem]{Corollary}
\newtheorem{remark}{Remark}
\title{Expedited Multi-Target Search with Guaranteed Performance \\
	via Multi-fidelity Gaussian Processes
%On Policies for Non-stationary Multiarmed Bandit Problem with Switching Cost
\thanks{This work was supported by NSF Award IIS-1734272}
}
\author{Lai Wei, Xiaobo Tan, and Vaibhav Srivastava% <-this % stops a space
%\thanks{*This work was not supported by any organization}% <-this % stops a space
\thanks{The authors are with the Department of Electrical and Computer Engineering. Michigan State University, East Lansing, MI 48823 USA.
        {\tt\small e-mail: \{weilai1, xbtan, vaibhav\}@msu.edu}}%
}
\begin{document}

\maketitle
\thispagestyle{empty}
\pagestyle{empty}

%%%%%%%%%%%%%%%%%%%%%%%%%%%%%%%%%%%%%%%%%%%%%%%%%%%%%%%%%%%%%%%%%%%%%%%%%%%%%%%%
\begin{abstract}
We consider a scenario in which an autonomous vehicle equipped with a downward facing camera operates in a 3D environment and is tasked with searching for an unknown number of stationary targets on the 2D floor of the environment. The key challenge is to minimize the search time while ensuring a high detection accuracy. We   model   the   sensing   field   using   a   multi-fidelity   Gaussian process that systematically describes the sensing information available at different altitudes from the floor.  Based on the sensing model, we design a novel algorithm called Expedited Multi-Target Search (EMTS) that (i) addresses the coverage-accuracy trade-off: sampling at locations farther from the floor provides wider field of view but less accurate measurements, (ii) computes an occupancy map of the floor within a prescribed accuracy and quickly eliminates unoccupied regions from the search space, and (iii) travels efficiently to collect the required samples for target detection. We  rigorously analyze the algorithm and establish formal guarantees on the target detection accuracy and the expected detection time. We illustrate the algorithm using a simulated multi-target search scenario. 

\end{abstract}

%%%%%%%%%%%%%%%%%%%%%%%%%%%%%%%%%%%%%%%%%%%%%%%%%%%%%%%%%%%%%%%%%%%%%%%%%%%%%%%%
\section{Introduction}

Autonomous multi-target search requires an autonomous agent to quickly and accurately locate multiple targets of interest in an unknown and uncertain environment. Examples include search and rescue missions, mineral exploration, and tracking natural phenomena. A key challenge in a multi-target search task is to balance several trade-offs including explore-vs-exploit: detecting a target with high accuracy versus finding new targets, and  speed-vs-accuracy: quickly versus accurately deciding on the presence of a target. The latter includes fidelity-vs-coverage trade-off: sampling at locations farther from the floor provides a wider field of view but less accurate measurements.

% observing a scene from high altitude with broad field of view and low fidelity observations versus observing it from low altitude with narrow field of view and high fidelity observations. 

In this paper, we design and analyze a multi-target search algorithm that addresses these trade-offs. In particular, for expedited search of multiple targets, our algorithm leverages multi-fidelity Gaussian processes to capture the fidelity-coverage trade-off,  information-theoretic techniques to efficiently explore the environment, and Bayesian techniques to accurately identify targets and construct an occupancy map.

Search and persistent monitoring problems have been studied  extensively in the literature. Informative path planning is subclass of these problems in which robot trajectories are designed to maximize the information collected along the way-points while ensuring that the distance traveled is within a prescribed budget. Such informative path planning problems are studied in~\cite{NEL-DAP-etal:07,SLS-MS-DR:12,CGC-XL-XD:13,RNS-MS-etal:11,AK-CEG:12}.

Gaussian processes (GPs) are most widely used models for capturing spatiotemporal sensing fields in robotics~\cite{williams2006gaussian, SV-FR-EN-HD:09}. While GP-based approaches have been used extensively, most of them rely on single-fidelity measurements, i.e., the sensing model does not consider  different altitudes at which the measurements can be collected. GP models have also been used extensively to plan informative trajectories for the robots~\cite{AK-CEG:12, AS-AK-CG-WJK:09, krause2008near, JLN-GJP:09, XL-MS:13}. However, most of these works focus on maximizing the reduction in uncertainty of the estimates.

{In the context of target search, the trajectory should be designed to balance the explore-exploit tension---the robot should spend more time at target locations, while learning target locations. There have been some efforts to address such explore-exploit tension within the context of informative path planning~\cite{DES-MS-DS:12, JY-MS-DR:14, VS-FP-FB:11za, VS-PR-NEL:14, hollinger2014sampling, GAH-BE-etal:13, hitz2017adaptive, hitz2014fully, atanasov2014information, meera2019obstacle, sung2019environmental,XL-MS:13}. }
% Gaussian Processes (GPs) have emerged as popular models for large environmental terrains and surfaces~\cite{SV-FR-EN-HD:09,GAH-BE-etal:13,hitz2017adaptive,hitz2014fully}. Informative path planning using such models of the environment has been studied~\cite{AS-AK-CG-WJK:09,NA-JLN-KD-GJP:13, JLN-GJP:09, XL-MS:13}. 

Hollinger~\etal~\cite{GAH-BE-etal:13} study an inspection problem in which the robot needs to classify the underwater surface. They use a combination of GP-implicit surface modeling and sequential hypothesis testing to classify surfaces. Meera~\etal~\cite{meera2019obstacle} study informative path planning for a target search problem. They model target occupancy as a GP and design a heuristic algorithm for target detection that handles trade-offs among information gain, field coverage, sensor performance, and collision avoidance. They illustrate the performance of their algorithm using numerical simulations. Sung~\etal~\cite{sung2019environmental} study the hot-spot identification problem in an environment within the framework of GP multiarmed bandits~\cite{srinivas2012information,PR-VS-NEL:13d}. The multi-target search can be viewed as a hot-spot identification problem in which, instead of global maximum of the field, all locations with value greater than a threshold need to be identified. Such problems have been studied in the multiarmed bandit literature~\cite{chen2014combinatorial,PR-VS-NEL:14h}; however, we are not aware of any such studies in the GP setting. Furthermore, all these works focus on single fidelity measurements, while we focus on multiple fidelities of measurements induced by the altitudes relative to the 2D floor at which the measurements are collected.

In this paper, we design an algorithm for expedited search of unknown number of targets located at the 2D floor of an unknown and uncertain 3D environment. We use autoregressive multi-fidelity GPs~\cite{kennedy2000predicting,kandasamy2016gaussian} to model the likelihood of the presence of a target at a location as computed by a computer vision algorithm using the sample collected at that location at a given altitude. Here, fidelity corresponds to the altitude at which the samples are collected.  A high altitude (low fidelity) sample provide more global but less accurate information  compared with a low altitude (high fidelity) sample. The low fidelity information can be used to quickly find easy-to-detect targets and this enables the robot to focus on high-fidelity information, possibly only in small regions in the environment and consequently, expedite the search. 
{The proposed EMTS algorithm comprises three main modules (i) a sampling  and fidelity planner,  (ii) a classification and region-elimination algorithm to construct occupancy map of the floor and eliminate unoccupied regions from search space, and (iii) a path planner that allows the vehicle to travel efficiently to collect required samples.} The major contributions of this work are: 
\begin{itemize}
	\item We extend the classical informative path planning approach for single-fidelity GPs to multi-fidelity GPs. This novel extension allows for jointly planning for sampling locations and associated fidelity-levels, and thus, addresses the fidelity-coverage trade-off.

    \item We augment the sampling and fidelity planner with a Bayesian classification and region-elimination algorithm that ensures that the targets are identified with a desired accuracy, as well as a Traveling Sales Person (TSP) path planner that enables travel-efficient sampling. 
    
    % We design EMTS that yields a TSP tour through sampling points and fidelity levels at which the robot should collect images at these sampling points.
    
    % \item We deice a Bayesian classification rule that ensures that the targets are identified with a desired accuracy. 

    \item We rigorously analyze the interaction of above algorithms and 
establish formal guarantees of the target detection accuracy and expected detection time. To the best of our knowledge, this is the first performance guarantee for GP based planning in terms of expected target detection time, even in the context of single-fidelity GPs.
%    \item We illustrate our algorithm through insightful examples. 
\end{itemize}

The remainder of the paper is organized as the following. We present a mathematical formulation of our problem in Section~\ref{sec:background}. 
 In Section~\ref{sec:algo}, we present the EMTS algorithm and illustrate it using an underwater victim search scenario in Section~\ref{sec:simulations}. We analyze the performance of EMTS in Section~\ref{sec:analysis} and conclude this work in Section~\ref{sec:conclusions}.

%\clearpage 

\section{Problem Description}\label{sec:background}

%In this section, we present a mathematical formulation of our multi-target search problem.

We consider an autonomous vehicle that moves in a 3D environment, e.g., an aerial or an underwater vehicle. We assume that the vehicle either moves with unit speed or hovers at a location. The vehicle is tasked with searching for multiple targets on the 2D floor of the environment. Let  $D \subset \real^2$ be the area of the floor in which the targets may be present. The vehicle is equipped with a fixed camera that points towards the floor. The vehicle travels across the environment and collects images/videos of the floor (samples) from different sampling points. These sampling points may be located at different altitudes relative to the floor of the environment. We assume that no sample is collected during the movement between sampling points to avoid misleading low-quality sensing information. The collected samples are processed with a computer vision algorithm that outputs a score, which corresponds to the likelihood of a target being present, for each frame. An example of such computer vision algorithm is the state of art deep neural network YOLOv$3$~\cite{redmon2018yolov3}. {The score will be used to update the estimate of the sensing output, i.e., the estimated score function $f: D \rightarrow [0,1]$ which will be used to determine the location of the targets. The stochastic model for $f$ is introduced below.} 

\subsection{Multi-fidelity Sensing Model}
% {\color{red}
GPs are widely used models for spatially distributed sensing outputs. In~\cite{meera2019obstacle}, a GP is used to model the target detection output of a computer vision algorithm. While target presence is a binary event, the computer vision algorithms such as YOLOv3 yield a score which is a function of the saliency and location of the target in the image. GPs are  appropriate models for such score functions. So far in the literature, GPs have been used in the context of single-fidelity measurements. To characterized the inherent fidelity-coverage trade-off in sensing the floor scene by an autonomous vehicle operating in 3D space, we employ a novel multi-fidelity GP model. The two key physical sensing characteristics the model seeks to capture are: (i) there is some information that can only be accessed at lower altitudes, (ii) the sensing outputs are more spatially correlated at higher altitudes, since the fields of view at neighboring locations have higher overlaps in their field of views.

%  Though it is either or true not a target exists in the filed of view, the output score of image processing algorithm becomes lower when the target is off the center, e.g. YOLOv3 can detect but gives a lower score when only part of the target is visible. So GP as a continuous model is still suitable. Normally,  }

We assume that the vehicle can collect samples of the floor from $M$ possible heights from the floor $z_1 > z_2>\cdots > z_M$. We refer to these heights as the fidelity level of the measurement, with $M$ (resp. $1$) corresponding to the highest (resp. lowest) level of fidelity. Let the score function $\map{g_m}{D}{[0,1]}$ be  defined by the output of the computer vision algorithm for an ideal noise-free image collected at fidelity level $m \in \until{M}$ with the field of view of the camera centered at $\bs x \in D$.
We assume that the score functions for a location $\bs x$ obtained from different altitudes (fidelity levels) are related to each other in an autoregressive manner as follows
\begin{equation}\label{def: mfgp-old}
g^m (\boldsymbol{x}) = a_{m-1} g^{m-1}(\boldsymbol{x}) + b^m (\boldsymbol{x}),
\end{equation}
where $a_{m-1}$ is a scale parameter and $b^m$ is the bias term that captures the information that can be only be accessed at fidelities levels greater than $m$. Let $f^{m}(\bs x) = \left(\prod_{i=m}^{M-1} a_i\right) g^m(\bs x)$ and $h^{m}(\bs x) = \left(\prod_{i=m}^{M-1} a_i\right) b^m(\bs x)$. Then, equation~\eqref{def: mfgp-old} reduces to 
\begin{equation}\label{def: mfgp}
f^m (\boldsymbol{x}) = f^{m-1}(\boldsymbol{x}) + h^m (\boldsymbol{x}), 
\end{equation}
where  $f^0(\bs x) = 0$ and $f(\bs x) := f^M(\bs x)$ is the score function at the highest fidelity level which we treat as ground truth.  We model the influence of systemic errors in sample collection and environmental uncertainty on the output of the computer vision algorithm for an input at fidelity level $m$ through an additive zero mean Gaussian random variable $\epsilon_m$ with variance $s_m^2$, i.e., $\epsilon_m \sim N(0, s_m^2)$. Consequently, the (scaled) score obtained by collecting a sample at location $\bs x$ is a random variable $y = f_m(\boldsymbol{x}) + \epsilon_m$. 

We assume that each $h_m$ is a realization of a Gaussian process with a constant mean $\mu_m$ and a squared exponential kernel function $k^m(\boldsymbol{x},\boldsymbol{x}')$ expressed as
\begin{equation}\label{def: stker}
k^m(\boldsymbol{x},\boldsymbol{x}') = v_m^2 \exp\left(-\frac{\norm{\boldsymbol{x}-\boldsymbol{x}'}^2}{2 l_m^2}\right),
\end{equation}
where $l_m$ is the length scale parameter, and $v_m$ is the variability parameter that satisfies $v_1 > v_2 > \cdots > v_M$.  This kernel function describes the spatial correlation of score function at neighboring locations at each fidelity level. Since the fields of view are more overlapped at lower fidelity levels, it results in $l_1 > l_2 > \cdots>l_M$. 

We assume that for an ideal highest-fidelity sample collected at location $\bs x$, the computer vision algorithm yields a score $f(\boldsymbol{x})$ greater than a threshold $\texttt{th}$, if the target is in the field of view at $(\bs x, z_M)$. 

%Therefore, 
%\[
%c^*(\boldsymbol{x}) = \mathbf{1} \left\{f (\boldsymbol{x}) \geq \texttt{th}\right\}.
%\]

\subsection{Objective of the Search Algorithm}

Our objective is to design an algorithm for sequentially determining sampling points that lead to expedited detection and localization of targets within a desired accuracy. In particular, the algorithm should classify, each location $\boldsymbol{x} \in D$, as \emph{empty} or \emph{target}, with the probability of misclassification less than $\delta \in (0,1/2)$. 
%
%Let $c^* :D \rightarrow \left\{0,1\right\}$ be the occupancy map of targets, i.e., 
%\[
%c^*(\bs x) =  \begin{cases}
%1, & \text{if } \bs x \text{ is occupied by a target}, \\
%0, & \text{otherwise}. 
%\end{cases}
%\]
Let $t(\boldsymbol{x},\delta)$ be the total (traveling and sampling) time  until the location $\boldsymbol{x}$ is classified with misclassification rate smaller than $\delta$. Then, the objective of the algorithm is to determine the sequence of sampling points that achieves efficient mean classification time 
\[
 \bar{t}(\boldsymbol{x},\delta) = \; \expt \left[ t(\boldsymbol{x},\delta))\right],
 \]
at each $\bs x \in D$.

\section{Expedited Multi-target Search Algorithm}\label{sec:algo}

 The proposed EMTS algorithm is illustrated in Fig.~\ref{fig.EMTS}. It operates using an epoch-based structure. In each epoch, sampling and fidelity planner computes a set of sampling points and the path planner optimizes a TSP tour going through those points. The vehicle follows the TSP tour to collect measurements at sampling points and the inference algorithm uses these measurements to update the estimate the score function $f$. Then, the Bayesian classification uses these estimates to compute an occupancy map of the floor and the region elimination module removes regions with no target with sufficiently high probability from the search space. In the following, we describe each of these modules in detail.

\begin{figure}[t]
	\begin{center}
		\includegraphics[width=0.45\textwidth]{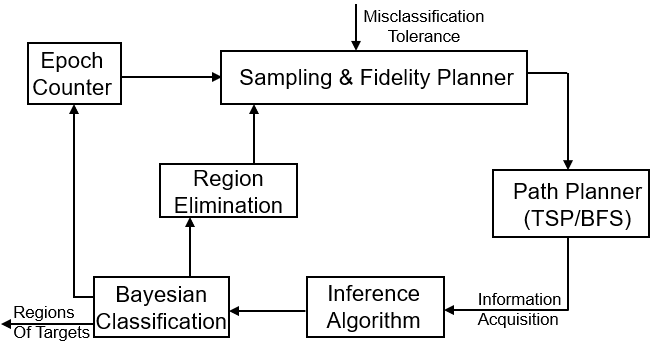}
		\caption{Architecture of EMTS}
		\label{fig.EMTS}		
	\end{center}
\end{figure}

\subsection{Inference Algorithm for Multi-fidelity GPs}

The Bayesian inference method for multi-fidelity GPs  discussed in this section is an extension of the inference procedure in~\cite{kennedy2000predicting} for the case of no sampling noise. Let the set of sampling location-score-fidelity tuples after $n$ observations be $\mathcal{P}_n = \setdef{\left(\boldsymbol{x}_i, y_i, m_i\right)}{i\in\until{n}}$. For each fidelity $m$, define a subset of $\mathcal{P}_n$,
\[P^{m}_n = \setdef{\left(\boldsymbol{x}_i, y_i, m_i\right) \in \mathcal{P}_n}{ m_i = m},\]
and $\lvert P^{m}_n \rvert$ denote the cardinality of $P_n^{m}$. Recall that $k^i(\bs x, \bs x')$ is the kernel function for the GP $h_i$ at $i$-th fidelity level. Let $\boldsymbol{K}^i_0 \big(P_n^{m} , P_n^{m'}\big)$ be a $\lvert P_n^{m} \rvert \times \lvert P_n^{m'}\rvert$ matrix with entries $k^i(\boldsymbol{x}, \boldsymbol{x}'), \ \bs x \in P_n^{m}, \ \bs x' \in P_n^{m'}$ and $\boldsymbol{K}^i_0(P_n^{m}, \boldsymbol{x})$ be a $\lvert P_n^{m}\rvert$ dimensional vector with entries $k^i_0(\boldsymbol{x'}, \boldsymbol{x}), \ \bs x' \in P_n^{m}$. Let 
$\boldsymbol{K}$ be a $M \times M$ block matrix with $\left(m,m'\right)$ block submatrix
\[
\boldsymbol{K}_{m,m'} =   \sum_{i=1}^{\min(m,m')} \boldsymbol{K}_i\big(P_n^{(m)} , P_n^{(m')}\big).
\]Let
$\boldsymbol{k}(\boldsymbol{x})$ be a $\abs{\mathcal{P}_n}$ dimensional vector constructed by concatenating  $M$ sub-vectors $\boldsymbol{k}(\boldsymbol{x}) = \big(\boldsymbol{k}^{1}(\boldsymbol{x}), \ldots, \boldsymbol{k}^{M}(\boldsymbol{x})\big)$,
where
\begin{equation}\label{eq: postupd}
\boldsymbol{k}^{m}(\boldsymbol{x}) = \sum_{i=1}^{m}  \boldsymbol{K}_i(P_n^{m}, \boldsymbol{x}),\quad \forall m \in \until{M}.
\end{equation}
Denoted by $\boldsymbol{\Theta}$ is the $M \times M$ diagonal matrix with variance of sampling noise at diagonal entries
\[
\boldsymbol{\Theta} = \diag{s_m^2 \boldsymbol{I}_{\lvert P_n^{m} \rvert}}_{m=\until{M}}.
\]

Let $\boldsymbol{\nu}_n = \left[ \nu_1,\ldots,\nu_n\right]$ be the a priori mean of the sample $ \boldsymbol{y}_n = \left(y_1,\ldots,y_n\right) $. In particular, if $y_j$ is a sample at fidelity $m$, then $\nu_j = \sum_{i=1}^m \mu_i$. 
The a priori covariance of $\bs y_n$ is $\boldsymbol{K+\Theta}$.  In the training process with training dataset $\mathcal{P}_n$, the hyperparameters $\seqdef{\mu_m, v_m, l_m, s_m}{m =1}^M$ and $\seqdef{a_m}{m=1}^{M-1}$ in the multi-fidelity GP can be learned by maximizing a log marginal likelihood function $-\frac{1}{2} \log \left(\det\left(2\pi\left(\boldsymbol{K} +  \boldsymbol{\Theta}\right)\right)\right) -\frac{1}{2} \left(\boldsymbol{y}- \bs \nu_n \right)^{T} \left(\boldsymbol{K} +  \boldsymbol{\Theta}\right)^{-1} \left(\boldsymbol{y}- \bs \nu_n \right)$.
Such training can be performed using the GP toolbox~\cite{perdikaris2017gaussian}.

% {\color{red}

% Notice that  is the covariance matrix for sampling results . Let  be the prior mean of $\boldsymbol{y}$.  In the training process, where $\mathcal{P}_n$ is a training dataset, the the hyperparameters $\seqdef{\mu_m, v_m, l_m, s_m}{m =1}^M$ and $\seqdef{a_m}{m=1}^{M-1}$ in the multi-fidelity GP can be learned by maximizing a log marginal likelihood function $-\frac{1}{2} \log \left(\det\left(2\pi\left(\boldsymbol{K} +  \boldsymbol{\Theta}\right)\right)\right) -\frac{1}{2} \left(\boldsymbol{y}- \bs \nu_n \right)^{T} \left(\boldsymbol{K} +  \boldsymbol{\Theta}\right)^{-1} \left(\boldsymbol{y}- \bs \nu_n \right)$.} 

Due to the multi-fidelity structure described in~\eqref{def: mfgp-old} and~\eqref{def: mfgp}, the prior mean and covariance of $f$ are
\[\mu_0 (\boldsymbol{x}) = \sum_{m=1}^M  \mu_m, \quad k_0 (\boldsymbol{x},\boldsymbol{x}') = \sum_{m=1}^{M}  k^m (\boldsymbol{x},\boldsymbol{x}').\]
When running EMTS with learned hyperparameters, it can be shown that the posterior mean and covariance functions of $f$ after $n$ measurements are
\begin{align}\label{eq:multi-GP-inference}
\begin{split}
\mu_n (\boldsymbol{x}) &=  \mu_0 (\boldsymbol{x}) + \boldsymbol{k}^T(\boldsymbol{x}) \left(\boldsymbol{K} +  \boldsymbol{\Theta}\right)^{-1}\left(\boldsymbol{y}- \bs \nu_n \right) \\
k_n \left(\boldsymbol{x}, \boldsymbol{x}'\right) &= k_0 \left(\boldsymbol{x}, \boldsymbol{x}'\right) - \boldsymbol{k}^T(\boldsymbol{x}) \left(\boldsymbol{K} + \boldsymbol{\Theta} \right)^{-1} \boldsymbol{k}(\boldsymbol{x}').
\end{split}
\end{align}
Note that the posterior variance $\sigma_n^2 (\boldsymbol{x}) = k_n \left(\boldsymbol{x}, \boldsymbol{x}\right)$ is a measure of uncertainty that will be utilized to classify $\boldsymbol{x}$. It should be noted that the measurements collected at different fidelity levels are appropriately incorporated in the inference~\eqref{eq:multi-GP-inference}.

\subsection{Multi-fidelity Sampling \& Path Planning}
For each epoch $j$, we seek to design an efficient sampling tour through sampling locations $\{(\boldsymbol{x}_{n_j+1}, z_{n_j+1}),\ldots,(\boldsymbol{x}_{n_{j+1}}, z_{n_{j+1}})\}$ to ensure

% $(\boldsymbol{x}_{n_j}, z_{n_j}),\ldots,(\boldsymbol{x}_{n'_j}, z_{n'_j})$ to make 
\[
\frac{\max_{\boldsymbol{x} \in D}  \sigma_{n_{j+1}}(\boldsymbol{x})}{\max_{\boldsymbol{x} \in D}  \sigma_{n_j}(\boldsymbol{x})} \leq \frac{3}{4},
\]
where $n_j$ is the number of samples collected before the beginning of the $j$-th epoch and the uncertainty reduction threshold at $3/4$ is selected based on the analysis discussed in Section~\ref{sec:analysis}.  

Notice that the posterior variance update in~\eqref{eq:multi-GP-inference} depends only on the location of the observations $\boldsymbol{y}_n$, but not on the realized value of $\boldsymbol{y}_n$. Therefore, the sequence of sampling location-fidelity tuples can be computed before physically visiting the locations. Such deterministic evolution of the variance has been leveraged  within the context of single-fidelity GP planning to design efficient sampling tours~\cite{kemna2017multi}. 

\subsubsection{Sampling Point Selection}
The vehicle follows a greedy sampling policy at each fidelity level, i.e., at each sampling round the vehicle selects the most uncertain point as the next sampling point
\begin{equation}\label{def: greedy}
\boldsymbol{x}_{n} = \argmax_{\boldsymbol{x} \in D} \, \sigma_{n-1}(\boldsymbol{x}).
\end{equation}
In the information theoretic view~\cite{AK-CEG:12}, the greedy policy is near optimal in terms of maximizing an appropriate measure of uncertainty reduction (see Section~\ref{sec:analysis}.)

% maximizing mutual information $I\left(\boldsymbol{y}_X ; f\right)$ which is a measure of uncertainty reduction. 

\subsubsection{Fidelity Selection}
For each sampling point $\boldsymbol{x}_{n}$, a fidelity level (or sampling altitude) needs to be assigned. We let the vehicle start at fidelity level $1$ and successively visit all fidelity levels from the lowest to the highest. Since sampling $f^m$ is not able to reduce the uncertainty about $f$ introduced by the subsequent bias terms $h^{m+1},\ldots,h^M$, we define the \emph{inaccessible uncertainty} at fidelity level $m$ as $\xi_m = \sum_{i=m+1}^{M} v^2_i$. Accordingly, we define the \emph{accessible uncertainty} about $f$ at fidelity level $m$ by $r^{m}_n = \max_{\boldsymbol{x}\in D} \sigma^2_n(\boldsymbol{x}) - \xi_m $. The assigned fidelity level to sample point $\boldsymbol{x}_n$ is designed to change from fidelity $m$  to $m+1$ when
\[
{r_n^{m}} \leq \frac{l_{m+1}^2}{l_m^2} v^2_{m+1}.
\]
Notice that before the vehicle begins to sample at fidelity level $m$, $r_n^{m} \geq v_m^2 \geq v^2_{m+1}{l_{m+1}^2}/{l_m^2}$, where the second inequality is due to the assumption that $v_m > v_{m+1}$ and $l_m > l_{m+1}$. This ensures that all fidelity levels are visited from the lowest to the highest successively.

\subsubsection{Path Planning}
Since the order of sampling locations does not influence the eventual posterior mean and variance, the path going through the sampling location can be optimized by computing an approximate TSP tour using packages, such as Concorde~\cite{applegate2006concorde}. Such a tour-based sampling policy allows for energy and time-efficient operation of the vehicle. If all measurements within epoch $j$ are collected at the same fidelity level, the vehicle traverses the TSP tour TSP$(\boldsymbol{x}_{n_j+1},\ldots,\boldsymbol{x}_{n_{j+1}})$ to collect measurements from sampling points and update posterior distribution of $f$. Otherwise, a TSP tour each is designed at every fidelity level.

% This important property of the Gaussian distribution allows us to select the sampling points beforehand. 
% So from round $n$, the samplings till round $n'$ can be planned with the greedy policy to get $\boldsymbol{x}_n,\ldots,\boldsymbol{x}_{n'}$. A even better situation is that we don not need to follow the order of sequence $\boldsymbol{x}_n,\ldots,\boldsymbol{x}_{n'}$. So the path planning can be treated as a Traveling Salesman Problem (TSP), and a good TSP tour TSP$(\boldsymbol{x}_n,\ldots,\boldsymbol{x}_{n'})$ can be calculated to shorten the traveling time.

\subsection{Classification and Region Elimination}
The classification and elimination of regions follows a confidence-bound-based rule, which has been widely used in pure exploration multi-armed bandit algorithms~\cite{audibert2010best} and robotic source seeking~\cite{rolf2018successive}. We extend these ideas to the case of multi-fidelity GP setting considered in this paper. 
% Other than the solving an optimization dealt by previous literatures, we extend the ideas to a multi-fidelity GP thresholding problem.

Conditioned on $\mc P_n$, the distribution of $f(\bs x)$ is Gaussian with mean function $\mu_n(\bs x)$ and variance $\sigma_n^2(\boldsymbol{x})$. Let $(L_n (\boldsymbol{x},\varepsilon), U_n (\boldsymbol{x},\varepsilon))$ be the Bayesian confidence interval containing $f(\bs x)$  with probability greater than $(1-2\varepsilon)$.  Here, the lower confidence bound $L_n$ and upper confidence bound $U_n$ are defined by 
$
L_n (\boldsymbol{x},\varepsilon) = \mu_n(\boldsymbol{x}) - c(\varepsilon) \sigma_n \left(\boldsymbol{x} \right), \, U_n (\boldsymbol{x},\varepsilon) = \mu_n(\boldsymbol{x}) + c(\varepsilon) \sigma_n\left(\boldsymbol{x} \right),
$
with $c(\varepsilon) = \sqrt{2 \ln\left(1/(2\varepsilon )\right)}$.

Given the desired maximum misclassification rate $\delta$, at the end of epoch $j$, a location $\boldsymbol{x}$ is classified as \emph{target}, if $L_{n_j} \left(\boldsymbol{x},\delta/2^j\right) \geq \texttt{th}$, and is added to $D_t$; while it is classified as \emph{empty}, if $U_{n_j} \left(\boldsymbol{x},\delta/2^j\right) < \texttt{th}$, and is added to the set $D_e$. Note that the confidence parameter $\varepsilon = \delta/2^j$ defining the lower and upper bounds is decreased exponentially with epochs, and we will show that it ensures  a misclassification rate smaller than $\delta$. The locations in the set $D_e$ are removed from sampling space $D$ at the end of each epoch.

% Some parts of sampling regions may be eliminated at end of each epoch, which allows the vehicle to avoid empty regions and accelerate the victim detection process. The pattern of region shrink depends on the actual sensing results, which explains the reason we use a on-line sampling and path planning policy.

Different rules can be used to terminate EMTS, such as giving termination time or setting maximum variance lower bound. In this work, we terminate EMTS when $99\%$ of the regions in $D$ are classified.

\section{An Illustrative Example}\label{sec:simulations}
In this section, we illustrate  EMTS using the Unmanned Underwater Vehicle Simulator~\cite{manhaes2016uuv}, which is a ROS package designed for Gazebo robot simulation environment. We integrate it with YOLOv$3$~\cite{redmon2018yolov3} for image classification and Concorde solver~\cite{applegate2006concorde} to compute TSP tours. We use $2$ fidelity levels situated at $11$m and $5$m from the water floor, respectively. Fig.~\ref{fig.env} shows our simulation setup, where $3$ victims are located at different locations on a $40\text{m} \times 40 \text{m}$ water floor. At each sampling point, the vehicle take $20$ images and YOLOv$3$ returns an average score about the confidence level of the existence of victims in the view.

\begin{figure}[ht!]
	\begin{center}
		\includegraphics[width=0.48\textwidth]{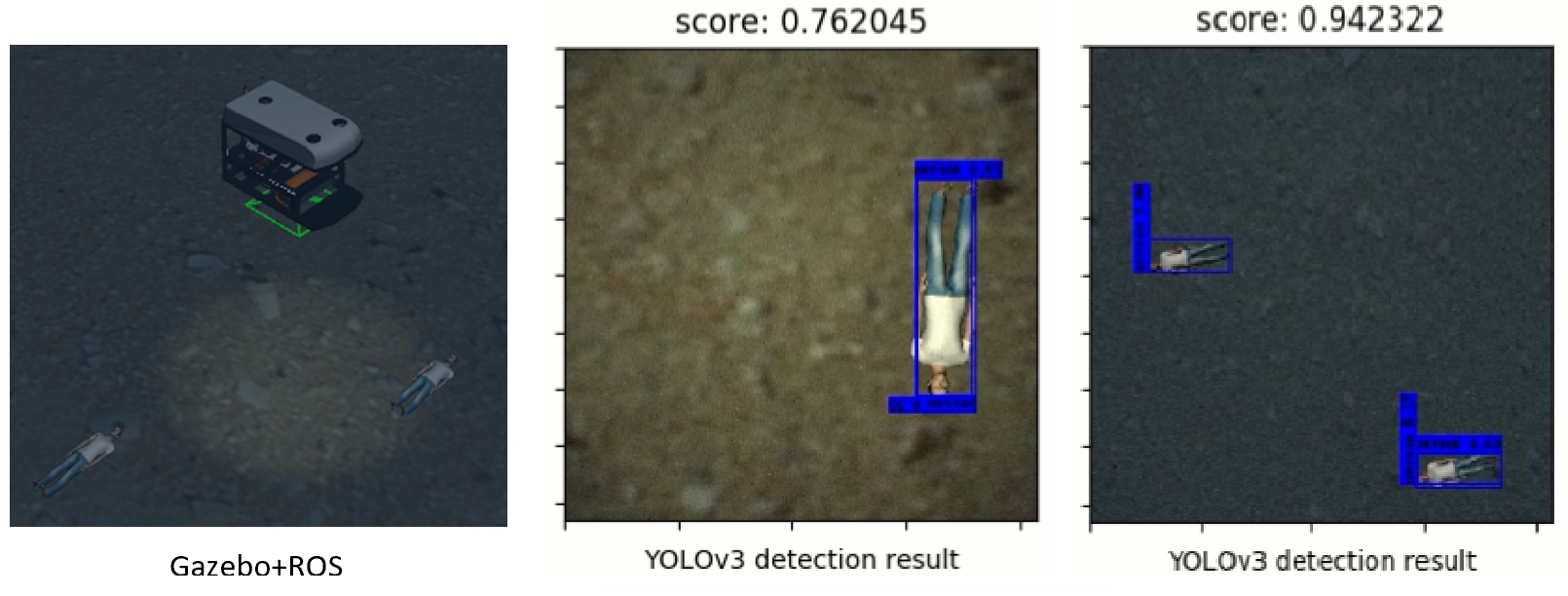}
		\caption{Test Environment: An underwater vehicle is equipped with a downward camera and a flash light to facilitate the searching task in dark underwater environment. Middle figure and right figure are detection result with YOLOv3 at a high fidelity level and a low fidelity level, respectively.}
		\label{fig.env}		
	\end{center}
\end{figure}

Each subplots of Fig.~\ref{key} shows the classification of regions before each epoch, the sampling points selected by the greedy policy and the planned path. Classifications of the environment are represented by $3$ colors: red means target exist, blue means no target, and green means uncertain. The dark green points and lines are the planned sampling locations and paths at the low fidelity level and red points and lines are sampling locations and paths at the high fidelity level. At the beginning of epoch $1$, all regions are classified as uncertain. After first, second and third exploration tours, classified regions increases to $85.5\%$, $98.4\%$ and $99.3\%$, respectively. The detection task is terminated since more than $99\%$ of the regions are classified. Notice that the vehicle switches to the high fidelity level at epoch $2$. The tours at  low and high fidelity levels are plotted using two different colors. The vehicles do not sample in blue regions since they have been classified as empty. In the final result, the regions with target are successfully found. A video of the simulation is available as supplementary material.

\begin{figure}[ht!]
	\centering
	\begin{subfigure}[b]{0.24\textwidth}
		\centering
		\includegraphics[width=\textwidth]{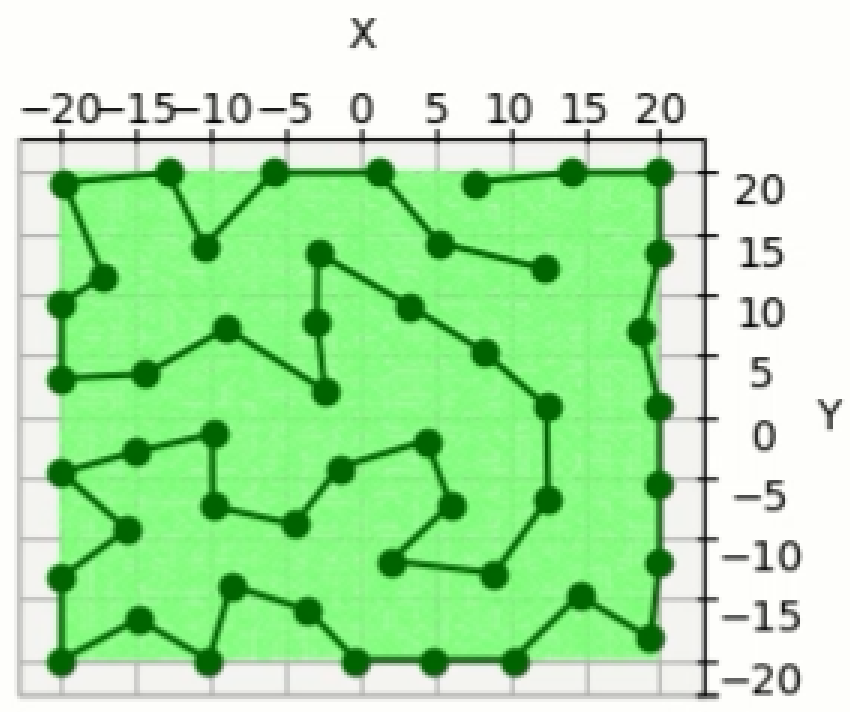}
		\caption{Epoch $1$}\label{1rst}
	\end{subfigure}%    <-- % added here
	\hfill %% useful if width of each figure is less the .5\textwidth
	\begin{subfigure}[b]{0.24\textwidth}
		\centering
		\includegraphics[width=\textwidth]{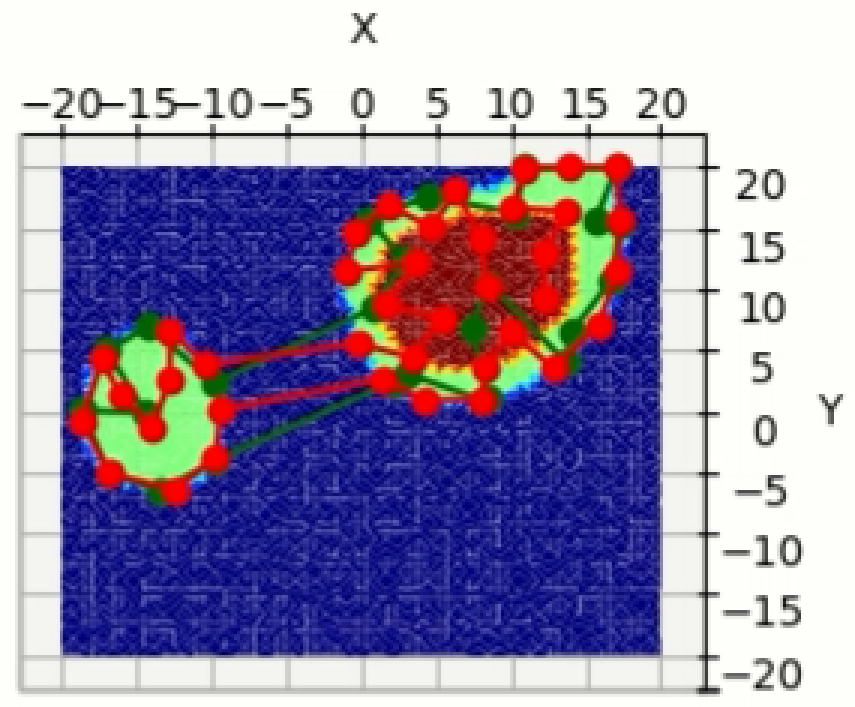}
		\caption{Epoch $2$}\label{2nd}
	\end{subfigure}
	\begin{subfigure}[b]{0.24\textwidth}
		\centering
		\includegraphics[width=\textwidth]{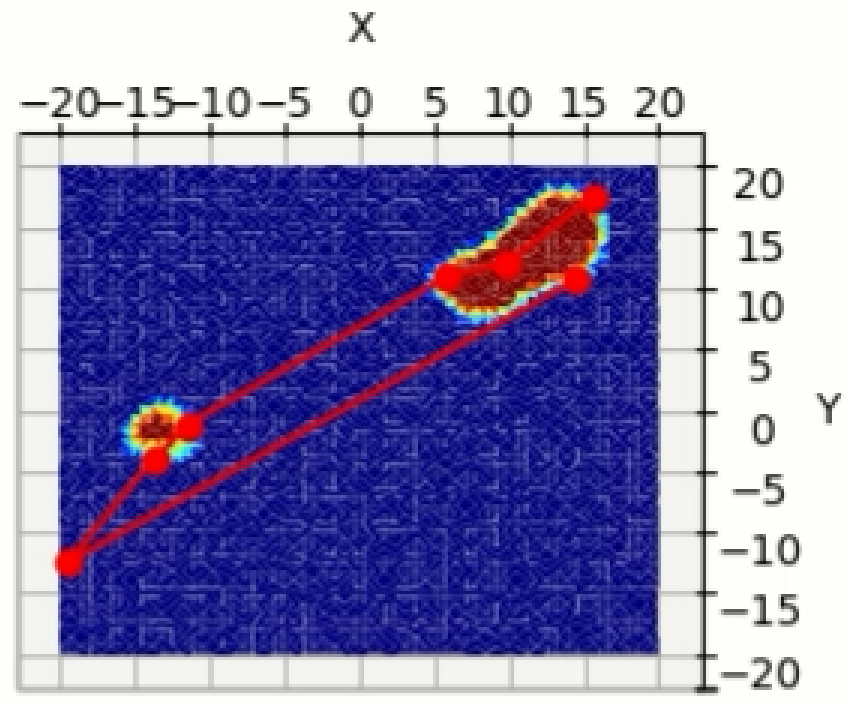}
		\caption{Epoch $3$}\label{3rd}
	\end{subfigure}
	\begin{subfigure}[b]{0.239\textwidth}
		\centering
		\includegraphics[width=\textwidth]{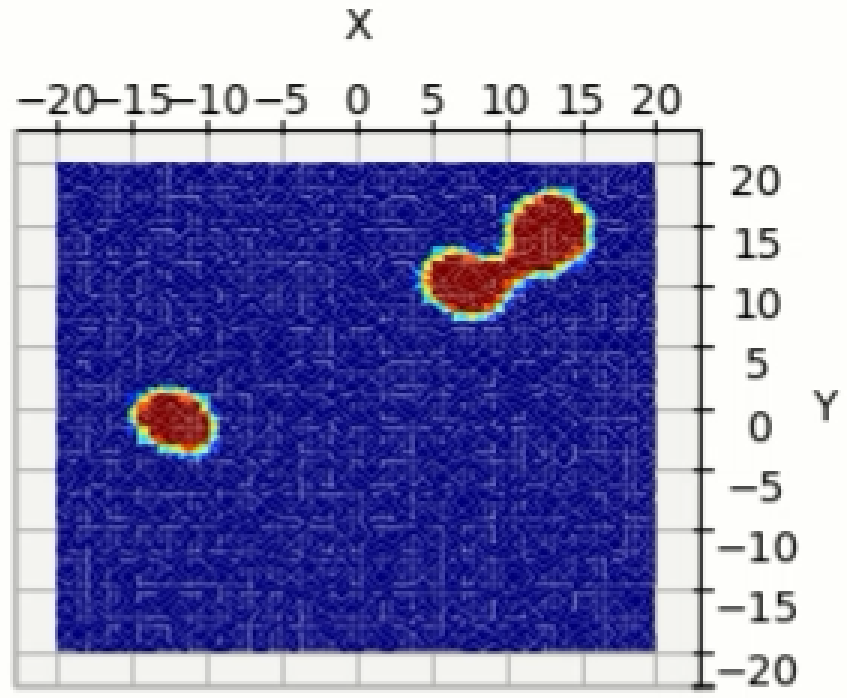}
		\caption{Final result}\label{final}
	\end{subfigure}

	\caption{Performance of EMTS. (i) The green dots and lines are sampled locations and the path traversed by the vehicle at the low fidelity level and the red ones are for the high fidelity level. (ii) Classification results of the environment are represented by $3$ colors: red means target exist, blue means no target, and green means uncertain. (iii) The vehicle switches to high fidelity level at epoch $2$.}\label{key}
\end{figure}

 In Fig.~\ref{post_var}, we show the heat map of posterior variance for the whole region. The regions classified as empty have larger posterior variance since they have been eliminated from sampling space. This shows that EMTS is able to put more focus on areas likely to contain victims. The uncertainty reductions, i.e. the decreases of maximum posterior variance, by doing multi-fidelity greedy sampling and single-fidelity greedy sampling, are compared in Fig.~\ref{var_cov}. It shows that greedy multi-fidelity sampling can reduce uncertainty much faster at the beginning stage, which will enable EMTS to eliminate unoccupied regions quickly, and hence, accelerate target search.

\begin{figure}[ht!]\label{pos}
	\centering
	\begin{subfigure}[b]{0.25\textwidth}
		\centering
		\includegraphics[width=\textwidth]{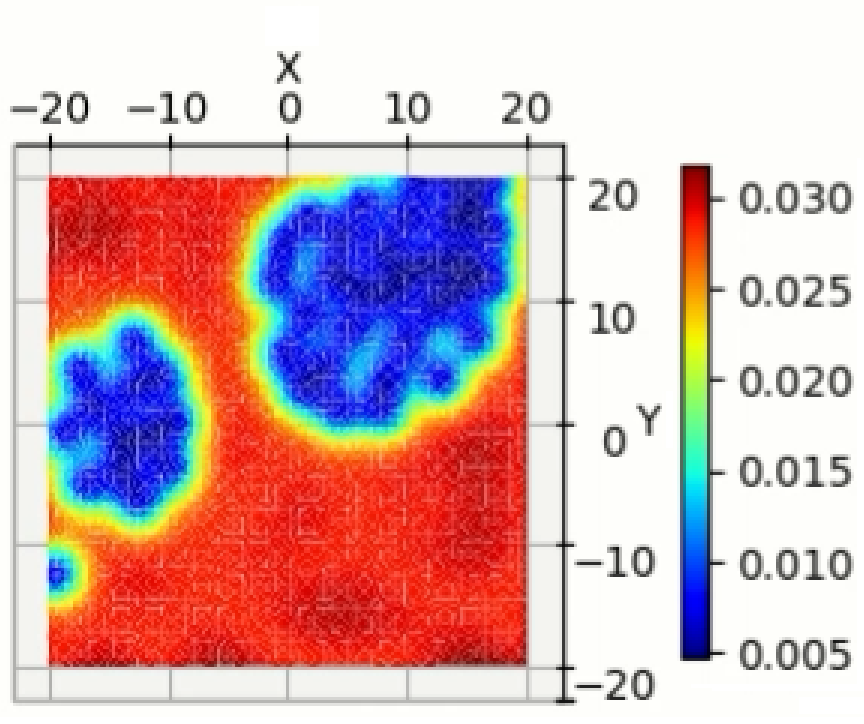}
		\caption{Final posterior variance}\label{post_var}
	\end{subfigure}%    <-- % added here
	\hfill %% useful if width of each figure is less the .5\textwidth
	\begin{subfigure}[b]{0.23\textwidth}
		\centering
		\includegraphics[width=\textwidth]{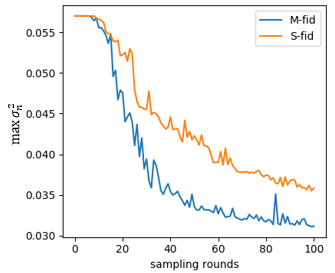}
		\caption{Convergence of $\sigma_n^2$}\label{var_cov}
	\end{subfigure}
	\caption{Uncertainty reduction results. (a) shows spatial nature of uncertainty reduction with EMTS, i.e., the posterior variance is low only at areas that likely contain a target. (b) shows the temporal nature of uncertainty reduction by comparing the decreasing speed of posterior variance with multi-fidelity greedy sampling and single fidelity greedy sampling. } 
\end{figure}

\section{Analysis of EMTS}\label{sec:analysis}
In this section, we analyze the modules of the EMTS algorithm and use these analyses to derive an upper bound on the expected detection time for the overall algorithm. 
% efficiency in independent subsection. In each subsection, we first present the preliminary results which can give insights on the reason of algorithms design and how they contribute to the concluded performance guarantees. At the end of first subsections, we show that EMTS achieves a misclassification rate smaller than $\delta$ at each location $\bs x$. And at the end of second subsection, we establish an upper bound on the expected classification time $\bar t(\boldsymbol{x},\delta)$.}

\subsection{Analysis of the classification algorithm}

% \subsubsection{Bayesian confidence sets}
We first characterize the Bayesian confidence interval for $f(\bs x)$, and then use this result to establish that the EMTS algorithm ensures a desired classification accuracy.

% In previous section, we claim that with probability at least $(1-2\varepsilon)$, $f(\bs x)$ belongs to $(L_n (\boldsymbol{x},\varepsilon), U_n (\boldsymbol{x},\varepsilon))$. This claims is supported by the following concentration inequality. 
\begin{lemma}[\bit{Bayesian confidence interval}]\label{lemma: conineq}
	For $f(\boldsymbol{x}) \, |\, \mathcal{P}_n \sim N \left(\mu_n(\boldsymbol{x}),\sigma_n^2(\boldsymbol{x}) \right)$ and $\varepsilon \in (0,1/2) $,
	\begin{equation*}\label{def: vlprob}
	\prob\left(f(\boldsymbol{x}) \leq L_n (\boldsymbol{x},\varepsilon) \right) = \prob\left(f(\boldsymbol{x}) \geq U_n (\boldsymbol{x},\varepsilon) \right) \leq \varepsilon.
	\end{equation*}
\end{lemma}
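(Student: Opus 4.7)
The plan is to treat this as a direct Gaussian tail bound calculation. Conditioned on $\mathcal{P}_n$, $f(\boldsymbol{x})$ is Gaussian with mean $\mu_n(\boldsymbol{x})$ and variance $\sigma_n^2(\boldsymbol{x})$, so I can write $f(\boldsymbol{x}) = \mu_n(\boldsymbol{x}) + \sigma_n(\boldsymbol{x}) Z$ with $Z \sim N(0,1)$. The events $\{f(\boldsymbol{x}) \leq L_n(\boldsymbol{x},\varepsilon)\}$ and $\{f(\boldsymbol{x}) \geq U_n(\boldsymbol{x},\varepsilon)\}$ then translate into $\{Z \leq -c(\varepsilon)\}$ and $\{Z \geq c(\varepsilon)\}$ respectively.

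The first claim, the equality of the two tail probabilities, is immediate from the symmetry of the standard normal about zero. For the inequality $\mathbb{P}(Z \geq c(\varepsilon)) \leq \varepsilon$, I would invoke a standard Chernoff-style Gaussian tail estimate of the form $\mathbb{P}(Z \geq t) \leq \tfrac{1}{2}\exp(-t^2/2)$ valid for $t \geq 0$ (this is sometimes proved by noting that $\mathbb{P}(Z \geq t) \leq e^{-t^2/2}\mathbb{P}(Z \geq 0) = \tfrac{1}{2}e^{-t^2/2}$ via the Mills-ratio / Chernoff argument, or by direct integration with the bound $\phi(z)/\phi(t) \leq 1$ on $[t,\infty)$ after a standard change of variables). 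Note that the constraint $\varepsilon \in (0,1/2)$ guarantees $c(\varepsilon) = \sqrt{2\ln(1/(2\varepsilon))} \geq 0$, which is what makes the tail bound applicable.

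Substituting $t = c(\varepsilon)$ gives $t^2/2 = \ln(1/(2\varepsilon))$, so $\tfrac{1}{2}\exp(-t^2/2) = \tfrac{1}{2} \cdot 2\varepsilon = \varepsilon$, yielding the desired bound. Since the lower-tail probability equals the upper-tail probability by symmetry, both are at most $\varepsilon$.

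There is no real obstacle here; the only decision is which of the several equivalent Gaussian tail bounds to quote. I would choose the $\tfrac{1}{2}e^{-t^2/2}$ form precisely because the factor of $\tfrac{1}{2}$ inside the logarithm defining $c(\varepsilon)$ is tuned to absorb it cleanly, making the final substitution a one-line calculation. If the authors prefer to avoid referencing an external bound, a self-contained alternative is to note that for $Z \sim N(0,1)$, $\mathbb{P}(Z \geq t) = \tfrac{1}{2}\mathbb{P}(|Z| \geq t) \leq \tfrac{1}{2}\mathbb{E}[e^{sZ - st}]$ for any $s>0$ and optimize over $s$ (giving $s=t$), which again yields $\tfrac{1}{2}e^{-t^2/2}$.
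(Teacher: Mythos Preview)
Your proposal is correct and follows essentially the same route as the paper: standardize the Gaussian, invoke the tail bound $\prob(Z \geq t) \leq \tfrac{1}{2}e^{-t^2/2}$ (which the paper cites from~\cite{MA-IAS:64}), and substitute $t = c(\varepsilon)$ to obtain $\varepsilon$. The only minor addition is that you make the symmetry argument for the equality of the two tails explicit, whereas the paper leaves it implicit.
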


\smallskip

\begin{proof}
	To normalize $f(\boldsymbol{x})$, let $r = \big(f(\boldsymbol{x}) - \mu(\boldsymbol{x})\big)/{\sigma(\boldsymbol{x})}$ and $c (\varepsilon)= \sqrt{2 \ln\left({1}/(2\varepsilon) \right)}.$
	Now $r \sim N(0,1)$, and from tail-inequality for standard normal distribution~\cite{MA-IAS:64}
	\begin{align*}
	\prob \left( r \geq c\right) 
	%&= \frac{1}{\sqrt{2 \pi } }\int_c^\infty \exp \left( \frac{-(r-c)^2 - c^2 }{2 }- c(r-c) \right) dr \\
	&\leq \frac{1}{2}\exp \left(  -\frac{c^2}{2} \right) = \varepsilon,
	\end{align*}
	which prove the $\prob\left(f(\boldsymbol{x}) \geq U_n (\boldsymbol{x},\varepsilon) \right) \leq \varepsilon$. Similar result holds for lower confidence bound.
\end{proof}
% \subsubsection{Misclassification Rate}
% We now show that the classification strategy employed in the EMTS algorithm ensures a misclassification rate at most equal to $\delta$. 
\begin{theorem}[\bit{Misclassification Rate}]\label{theorem: miscl}
	For the classification strategy in the EMTS algorithm, a location $\boldsymbol{x} \in D$ is misclassified with probability at most equal to $\delta$.
\end{theorem}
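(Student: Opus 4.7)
The plan is to fix an arbitrary location $\boldsymbol{x} \in D$ and bound the probability that EMTS ever outputs an incorrect label for it. First I would split on the ground truth. In Case 1, $\boldsymbol{x}$ is truly a target, i.e.\ $f(\boldsymbol{x}) \geq \texttt{th}$; in Case 2, $\boldsymbol{x}$ is truly empty, i.e.\ $f(\boldsymbol{x}) < \texttt{th}$. This case split is important because, conditional on the ground truth, only one side of the Bayesian confidence interval can trigger a misclassification, giving a per-epoch error bound of $\varepsilon_j = \delta/2^j$ rather than the two-sided $2\varepsilon_j$.

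Next, for each epoch $j \geq 1$, I would define $E_j$ as the event that $\boldsymbol{x}$ is misclassified at the end of epoch $j$ and bound $\prob(E_j)$ using Lemma~\ref{lemma: conineq} with confidence parameter $\varepsilon_j = \delta/2^j$. In Case 1, mislabeling $\boldsymbol{x}$ as empty at epoch $j$ requires $U_{n_j}(\boldsymbol{x}, \delta/2^j) < \texttt{th} \leq f(\boldsymbol{x})$, hence $f(\boldsymbol{x}) > U_{n_j}(\boldsymbol{x}, \delta/2^j)$, which by Lemma~\ref{lemma: conineq} occurs with probability at most $\delta/2^j$. In Case 2, a symmetric argument with the lower confidence bound gives $\prob(E_j) \leq \delta/2^j$.

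Finally, I would close the argument with a union bound: since the eventual label of $\boldsymbol{x}$ is wrong only if some $E_j$ occurs,
\begin{equation*}
\prob(\boldsymbol{x} \text{ is misclassified}) \;\leq\; \prob\!\Big(\bigcup_{j\geq 1} E_j\Big) \;\leq\; \sum_{j=1}^{\infty} \frac{\delta}{2^j} \;=\; \delta.
\end{equation*}
The geometric decay of the confidence schedule $\varepsilon_j = \delta/2^j$ is precisely what makes the union bound telescope to $\delta$, which motivates the particular choice of schedule used in EMTS.

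I do not expect a serious obstacle here; the statement is essentially a clean application of Lemma~\ref{lemma: conineq} plus a union bound. The only subtlety worth flagging is that the sampling locations contributing to $\mathcal{P}_{n_j}$ are themselves random and depend on past observations, so one must note that Lemma~\ref{lemma: conineq} provides a bound that holds for every realization of $\mathcal{P}_{n_j}$ (i.e.\ it is conditional on $\mathcal{P}_{n_j}$); hence the per-epoch bound on $\prob(E_j)$ holds unconditionally after taking expectation over $\mathcal{P}_{n_j}$, and the union bound goes through as written.
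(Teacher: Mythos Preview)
Your proposal is correct and follows essentially the same route as the paper: condition on the true label of $\boldsymbol{x}$, apply Lemma~\ref{lemma: conineq} with $\varepsilon=\delta/2^j$ at each epoch, and finish with a union bound over the geometric series $\sum_{j\ge 1}\delta/2^j=\delta$. Your explicit remark that Lemma~\ref{lemma: conineq} holds conditionally on any realization of $\mathcal{P}_{n_j}$ is a nice addition that the paper leaves implicit.
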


\begin{proof}
Consider a location $\boldsymbol{x}$ such that $f(\boldsymbol{x}) \leq \texttt{th}$, i.e., the true classification of $\bs x$ is \emph{empty}. Since at the end of epoch $j$, the lower and upper confidence bounds used for classification employ $\varepsilon = \delta/2^j$, we apply a union bound to show the probability of classifying $\boldsymbol{x}$ as a \emph{target} satisfies
\begin{align*}
	\sum_{j=1}^{\infty} \prob\left( L_{n_j} (\boldsymbol{x},\delta/2^j) > \texttt{th}\right)
	&\leq \sum_{j=1}^{\infty} \prob\left( L_{n_j}(\boldsymbol{x},\delta/2^j) > f(\boldsymbol{x})\right).
\end{align*}
Then, it follows from Lemma~\ref{lemma: conineq} that the misclassification probability is no greater than $\sum_{j=1}^{\infty} {\delta}/{2^j} = \delta$. The case of location $\bs x$ being occupied by a target follows similarly.
\end{proof}

\subsection{Analysis of the Sampling and Fidelity Planner}

We now analyze the information gain and uncertainty reduction properties for our sampling and fidelity planner. We first recall some results for the single fidelity planner and then extend them to the case of multi-fidelity planner.

% To analyze the efficiency of EMTS, an important question is how much time does each epoch take. Recall that in each epoch, a primary task is to reduce $1/4$ of the uncertainty (maximum standard deviation). To answer this question, we first analyze the information gain with greedy policy in both single-fidelity GP and multi-fidelity GP. Then the connection is build between information gain and uncertainty reduction. It will be shown that the uncertainty reduction can be accelerated with multi-fidelity sensing with designed fidelity selection rule, which is the main reason this sensing model is employed. Based upon the uncertain reduction result, and upper bound on expected classification time $\bar t(\boldsymbol{x},\delta)$ is given at the end of this subsection. 

% \subsubsection{Information Gain in Single-fidelity GP}
Consider a single-fidelity GP $f$ that is sampled with additive Gaussian noise with variance $s^2$. Let $X_n$ be the set of first $n$ sampling points and let the vector of associated observations be $\boldsymbol{y}_{X_n}$. 
% Then, the entropy of Gaussian random vector $\boldsymbol{y}_{X_n}$ satisfies
% \begin{align*}
% H\left( \boldsymbol{y}_{X_n} \right) &= H\left( \boldsymbol{y}_{X_{n-1}} \right) + H\left( \boldsymbol{y}_{X_n} \,|\,\boldsymbol{y}_{X_{n-1}} \right)\\
% & = H\left( \boldsymbol{y}_{X_{n-1}} \right) + \frac{1}{2} \log \left( 2\pi e\left(s^2 + \sigma_{n-1}^2\left(\boldsymbol{x}_n\right)\right) \right).
% \end{align*}
It is shown in~\cite[Lemma 5.3]{srinivas2012information} that the mutual information between $\boldsymbol{y}_{X_n}$ and $f$ is
\begin{equation}
I \left(\boldsymbol{y}_{X_n} ; f \right) = \frac{1}{2}  \sum_{i=1}^{n} \log \left( 1 + s^{-2}\sigma_{i-1}^2\left(\boldsymbol{x}_i\right) \right),\label{eq:mutual-info}
\end{equation}
% \begin{align}
% I \left(\boldsymbol{y}_{X_n} ; f \right) &=  H\left( \boldsymbol{y}_{X_n}\right) - H\left( \boldsymbol{y}_{X_n} \,|\,f\right) \nonumber \\
% &= H\left( \boldsymbol{y}_{X_n}\right) - H\left( \boldsymbol{y}_{X_n} \,|\,\boldsymbol{{f}}_{X_n}\right) \nonumber \\
% &= \frac{1}{2} \sum_{i=1}^{n} \log \left( 1 + s^{-2}\sigma_{i-1}^2\left(\boldsymbol{x}_i\right) \right),\label{eq:mutual-info}
% \end{align}
where $\boldsymbol{{f}}_{X_n}$ is the vector of $f(\bs x)$ calculated at points in $X_n$.  Let the maximal mutual information gain with $n$ samples be
\[
\gamma_n := \max_{Z \in D : \, \abs{Z} = n} I\left(\boldsymbol{y}_Z ; f\right). 
\]
Let $\subscr{I}{greedy}$ be the total mutual information gain using a greedy policy that maximizes the summand in ~\eqref{eq:mutual-info} at each sampling step. It follows, due to submodularity~\cite{nemhauser1978analysis} of $I \left(\boldsymbol{y}_{X_n} ; f \right)$, that 
% A information maximizing sampling strategy seeks to maximize the expression in~\eqref{eq:mutual-info}. A  greedy policy that  It is known that the set function $I\left(\boldsymbol{y}_Z ; f\right)$ is submodular~\cite{nemhauser1978analysis} in $Z$. Therefore, the greedy policy achieves near optimal mutual information gain. Formally, let the maximal mutual information gain with $n$ samples be
% \[
% \gamma_n := \max_{Z \in D : \, \abs{Z} = n} I\left(\boldsymbol{y}_Z ; f\right),
% \]
% and . Then we have
\[
\left(1-\frac{1}{e}\right) \gamma_n \leq \subscr{I}{greedy} \left(\boldsymbol{y}_{X_n} ; f \right) \leq \gamma_n,
\]
While giving an exact value of $\gamma_n$ is difficult, an upper bound on $\gamma_n$  for squared exponential kernel derived in~\cite{srinivas2012information} is presented in the following Lemma~\ref{lemma: parmi}.

\begin{lemma}[\bit{Information gain for squared exp. kernel}] \label{lemma: parmi}
	Let a GP $f$ be defined on domain $D \subset \real^2$. If $f$ has squared exponential kernel with length scale $l$, then the maximum mutual information satisfies
	\[
	\gamma_n(l) \in O( l^{-2}(\log n)^3).
	\]
\end{lemma}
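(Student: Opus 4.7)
The plan is to adapt the argument of Srinivas \etal~\cite{srinivas2012information}, which bounds $\gamma_n$ in terms of the spectral decay of the kernel operator, and then track how the length scale $l$ enters the bound. First, I would rewrite the mutual information in determinantal form as $I(\bs y_Z;f)=\tfrac12\log\det(\bs I+s^{-2}\bs K_Z)$, where $\bs K_Z$ is the $n\times n$ Gram matrix of the SE kernel on any candidate sample set $Z\subset D$ of size $n$. Since $\log\det$ equals the sum of logs of eigenvalues, bounding $\gamma_n$ reduces to a uniform control, over all $Z$, on $\sum_{j=1}^{n}\log\bigl(1+s^{-2}\hat\lambda_j(Z)\bigr)$, where $\hat\lambda_j(Z)$ are the ordered eigenvalues of $\bs K_Z$.

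Next I would pass from the discrete Gram eigenvalues to the eigenvalues $\{\lambda_j\}_{j\ge 1}$ of the integral kernel operator on $L^2(D)$ via Mercer's theorem. For a SE kernel with length scale $l$ on a compact subset of $\real^2$, a Mehler-type expansion gives geometric decay of the form $\lambda_j \le C_1\exp(-C_2 l\, j^{1/2})$, so effectively $\lambda_j$ becomes negligible once $j \gtrsim l^{-2}(\log n)^2$. Equivalently, rescaling coordinates by $1/l$ converts the SE kernel to a unit-length-scale SE kernel on the dilated domain $l^{-1}D$, whose volume scales as $l^{-2}$; this is where the $l^{-2}$ prefactor in the final bound originates.

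With this spectral estimate in hand, I would split the sum $\sum_{j=1}^{n}\log(1+s^{-2}\lambda_j)$ into a \emph{head} over $j\le n_T$ and a \emph{tail} over $j>n_T$. The head is bounded by $n_T\log(1+s^{-2}\lambda_1)\in O(n_T\log n)$, using the crude fact that $\lambda_j\le\operatorname{tr}(\bs K_Z)\le n\, v^2$. The tail is bounded by $\sum_{j>n_T} s^{-2}\lambda_j$, which, thanks to the geometric decay, is $O(1)$ as soon as $n_T \asymp l^{-2}(\log n)^2$. Choosing $n_T$ at exactly this threshold and combining the two pieces yields $\gamma_n(l)\in O\bigl(l^{-2}(\log n)^3\bigr)$. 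Invariance of the argument across subsets $Z$ (via the operator upper bound) then gives the uniform statement for $\gamma_n$.

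The main obstacle I anticipate is making the $l$-dependence fully rigorous: Srinivas \etal state the bound $O((\log n)^{d+1})$ for fixed $l$ and do not track $l$ explicitly, so I would either (a) execute the rescaling $\bs x\mapsto \bs x/l$ and apply their bound on the enlarged domain $l^{-1}D$, importing the $l^{-2}$ factor through the volume, or (b) track $l$ through the Mehler eigenvalue formula directly. Either route should be routine modulo careful bookkeeping of constants.
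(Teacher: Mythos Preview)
Your proposal is correct, but you are working harder than the paper does. The paper's proof is a three-line black-box argument: (i) invoke Srinivas \etal\ for $\gamma_n \in O((\log n)^{3})$ when the kernel is SE with unit length scale on $[0,1]^2$; (ii) invoke Kandasamy \etal\ for the fact that $\gamma_n$ scales linearly with the area of $D$, so a domain of diameter $d$ gives $\gamma_n \in O(d^2(\log n)^3)$; (iii) observe that an SE kernel with length scale $l$ on $D$ is identical to the unit-length-scale SE kernel on the dilated domain $l^{-1}D$, so the area picks up a factor of $l^{-2}$. This is precisely your route~(a), which you mention only as a fallback at the end.

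Your primary plan---reopening the Srinivas spectral argument, using the Mercer expansion and a head/tail split at $n_T\asymp l^{-2}(\log n)^2$ while tracking $l$ through the Mehler eigenvalue decay---would also go through and has the virtue of being self-contained (no auxiliary citation for the area scaling). The cost is substantially more bookkeeping, and some care is needed because the Mehler formula is for a Gaussian reference measure rather than Lebesgue on a compact $D$; you would have to import or adapt the eigenvalue tail estimates on compact domains rather than quote Mehler literally. If your goal is to match the paper, promote your route~(a) to the main argument and cite the two results; if you want an independent proof, your head/tail skeleton is the right one.
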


\begin{proof}
For a GP defined on $D\in [0,1]^2$ with squared exponential kernel function $k(\boldsymbol{x},\boldsymbol{x}') = \exp(-\|{\boldsymbol{x}-\boldsymbol{x}'}\|^2/2)$, $\gamma_n \in O((\log n)^{3})$~\cite{srinivas2012information}.
% 	Specializing Lemma~\ref{lemma: expk} to our setting, we have 
% $\gamma_n \in  O((\log n)^3)$ for $D \in [0,1]^2$. Due to the important role of length scale parameters, we want to express $\gamma_n$ with respect them as follows.
		It is shown in~\cite{kandasamy2016gaussian} that $\gamma_n$ scales with the area of $D$. Thus, if the diameter of $D$ is $d$, then $\gamma_n \in O\left(d^2 (\log n)^3 \right)$. Note that having length scale $l$ in kernel function is equivalent to scale $D$ by $1/l$. Accordingly, $\gamma_n \in O\left(d^2 l^{-2}(\log n)^3\right)$. For fixed $D$, we omit diameter $d$ from the order notation and write $\gamma_n(l) \in O( l^{-2} (\log n)^3)$.
\end{proof}

% \subsubsection{Information gain in multi-fidelity GP}
Lemma~\ref{lemma: parmi} provides a bound on the mutual information gain at the first fidelity level. For higher fidelity levels, the Gaussian process is composed of summation of independent GPs. We now  establish  that the information gained by sampling the sum of GPs is smaller than the information gained by sampling them independently, and then use this result to establish the bound on information gain for multi-fidelity GPs. 
\begin{lemma}[\bit{Information gain for sum of GPs}]\label{lemma: compmi}
	Let $h_1 \sim GP(\mu_1(\boldsymbol{x}), k_1(\boldsymbol{x},\boldsymbol{x}') ) $ and $h_2 \sim GP(\mu_2(\boldsymbol{x}), k_2(\boldsymbol{x},\boldsymbol{x}') ) $ be independent GPs. Consider a measurement $y = h_1(\boldsymbol{x})+ h_2(\boldsymbol{x})+\epsilon$ at point $\bs x$, where $\epsilon$ is additive measurement noise independent of $h_1$ and $h_2$. Let 
	$\boldsymbol{y}_X = \boldsymbol{h}_{1,X}+\boldsymbol{h}_{2,X} +\boldsymbol{\epsilon}$ be the vector of such measurements at sampling points in a set $X$, where $\bs \epsilon$ is the vector of i.i.d. measurement noise. Then, %the mutual information
	\[
	I(\boldsymbol{y}_X  ; h_1 + h_2) \leq I(\boldsymbol{h}_{1,X}+\boldsymbol{\epsilon};h_1) + I(\boldsymbol{h}_{2,X}+\boldsymbol{\epsilon};h_2).
	\]
\end{lemma}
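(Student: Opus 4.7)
The plan is to establish the bound via two applications of the data-processing inequality (DPI) together with the chain rule for mutual information. The key observation is that $h_1+h_2$ is a deterministic function of $(h_1,h_2)$, so pooling the GPs together cannot reveal more than the two-GP joint, and independence of $h_1$ and $h_2$ lets us split the joint into a sum of scalar mutual informations that match the two terms on the right-hand side.

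First I would use DPI to obtain
\[
I(\boldsymbol{y}_X ;\, h_1+h_2) \;\leq\; I(\boldsymbol{y}_X ;\, h_1, h_2),
\]
which is valid because the Markov chain $\boldsymbol{y}_X \to (h_1,h_2) \to h_1+h_2$ holds trivially (the last arrow is deterministic). Next, I would apply the chain rule to decompose the joint mutual information as
\[
I(\boldsymbol{y}_X ;\, h_1, h_2) \;=\; I(\boldsymbol{y}_X ;\, h_2) + I(\boldsymbol{y}_X ;\, h_1 \mid h_2).
\]

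For the conditional term, I would exploit that given the full process $h_2$, the vector $\boldsymbol{h}_{2,X}$ is known, so subtracting it shows $\boldsymbol{y}_X - \boldsymbol{h}_{2,X} = \boldsymbol{h}_{1,X} + \boldsymbol{\epsilon}$. Since $h_2$ is independent of $(h_1,\boldsymbol{\epsilon})$, conditioning on $h_2$ does not alter the joint distribution of $(\boldsymbol{h}_{1,X}+\boldsymbol{\epsilon},\,h_1)$, giving
\[
I(\boldsymbol{y}_X ;\, h_1 \mid h_2) \;=\; I(\boldsymbol{h}_{1,X}+\boldsymbol{\epsilon} ;\, h_1).
\]
For the first term, I would invoke DPI a second time along the Markov chain $h_2 \to (\boldsymbol{h}_{2,X}+\boldsymbol{\epsilon}) \to \boldsymbol{y}_X$, which holds because $\boldsymbol{y}_X = (\boldsymbol{h}_{2,X}+\boldsymbol{\epsilon}) + \boldsymbol{h}_{1,X}$ with $\boldsymbol{h}_{1,X}$ independent of everything else; thus $\boldsymbol{h}_{1,X}$ acts as a ``post-noise'' that cannot increase mutual information with $h_2$. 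This yields $I(\boldsymbol{y}_X;\,h_2) \leq I(\boldsymbol{h}_{2,X}+\boldsymbol{\epsilon};\,h_2)$, and summing the two bounds finishes the proof.

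The main subtlety is verifying the Markov-chain structure used in the second DPI step; specifically, one must be careful to argue that the ``extra'' independent term $\boldsymbol{h}_{1,X}$ plays the role of a noisy channel applied to $\boldsymbol{h}_{2,X}+\boldsymbol{\epsilon}$, so that conditional independence $h_2 \perp \boldsymbol{y}_X \mid (\boldsymbol{h}_{2,X}+\boldsymbol{\epsilon})$ holds. All other steps are standard manipulations of mutual information, and the chain-rule decomposition is exact (equality), so no tightness is lost beyond the two DPI inequalities.
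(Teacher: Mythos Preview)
Your proposal is correct and follows essentially the same route as the paper, which simply states that the result ``can be established by applying the data processing inequality'' and cites Cover and Thomas without further detail. Your write-up fills in precisely the steps one would use to make that sentence rigorous---DPI to pass from $h_1+h_2$ to $(h_1,h_2)$, chain rule, the bijective shift argument for the conditional term, and a second DPI for the remaining term---so there is no divergence in approach, only in level of detail.
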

\begin{proof}
This result can established by applying the data processing inequality~\cite[Theorem 2.8.1]{cover2012elements}.
% 	The data processing inequality~\cite[Theorem 2.8.1]{cover2012elements} indicates
% 	\begin{align*}
% 	I(\boldsymbol{y}_X;h_1 + h_2) &\leq I(\boldsymbol{y}_X; h_1,h_2 ) \\
% 	&= I(\boldsymbol{y}_X; h_1) +I(\boldsymbol{y}_X; h_2\,|\,h_1).
% 	\end{align*}
% 	Applying the data processing inequality again, we get
% 	\begin{align*}
% 	I(\boldsymbol{y}_X;h_1 )
% 	&\leq I(\boldsymbol{h}_{1,X}+\boldsymbol{\epsilon}, \boldsymbol{h}_{2,X}; h_1 ) \\
% 	&= I(\boldsymbol{h}_{2,X};h_1) + I( \boldsymbol{h}_{1,X}+\boldsymbol{\epsilon};h_1 \,|\,\boldsymbol{h}_{2,X})\\
% 	&=I( \boldsymbol{h}_{1,X}+\boldsymbol{\epsilon};h_1 ),
% 	\end{align*}
% 	where in the last step follows due to the independence of $h_1$, $h_2$ and $\bs \epsilon$. Similarly, it can be shown
% 	\begin{align*}
% 	I(\boldsymbol{y}_X;h_2\,|\,h_1 ) &=I( \boldsymbol{h}_{1,X}+\boldsymbol{h}_{2,X} +\boldsymbol{\epsilon};h_2 \,|\, h_1) \\
% 	&= I( \boldsymbol{h}_{2,X}+\boldsymbol{\epsilon}; h_2).
% 	\end{align*}
% 	This establishes the lemma. 
\end{proof}

Let $\gamma_n^{m}$ be the maximal mutual information gain at fidelity $m$. It follows from Lemma~\ref{lemma: compmi} and the multi-fidelity GP model in~\eqref{def: mfgp} that $\gamma_n^{m} \leq \sum_{i=1}^{m} \gamma_n(l_i)$. Combining this inequality with Lemma~\ref{lemma: parmi}, we obtain the following result. 
\begin{corollary}[\bit{{Information gain for multi-fidelity GPs}}]\label{corollary: mfmi}
% 	The maximum mutual information sampling $f^{(m)}$ at $m$ fidelity level is sampled, satisfies
	The maximal mutual information gain at fidelity $m$ satisfies
	\begin{align*}
 \gamma_{n}^{m} \in O\Big(\sum_{i=1}^m l_i^{-2}(\log n)^3\Big).  
	\end{align*}
	
\end{corollary}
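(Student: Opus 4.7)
The plan is to reduce the multi-fidelity information gain to a sum of single-fidelity gains and then invoke Lemma~\ref{lemma: parmi}. Iterating the additive recursion~\eqref{def: mfgp} yields $f^m = \sum_{i=1}^{m} h_i$, where the $h_i$ are mutually independent GPs with squared exponential kernels of length scales $l_1,\ldots,l_m$. A noisy sample at level $m$ therefore takes the form $y = \sum_{i=1}^m h_i(\bs x) + \epsilon_m$, which matches the sum-of-GPs measurement model used in Lemma~\ref{lemma: compmi}.

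Next I would extend Lemma~\ref{lemma: compmi} from two summands to $m$ summands by a short induction on $m$. Applying the two-summand inequality with $h_1$ and the (still Gaussian) sum $\tilde h := \sum_{i=2}^{m} h_i$, whose kernel is the sum of the component kernels, gives
\[ I(\bs y_X ; f^m) \;\leq\; I(\bs h_{1,X}+\bs\epsilon;\, h_1) \;+\; I(\tilde{\bs h}_{X}+\bs\epsilon;\, \tilde h). \]
Peeling one $h_i$ off the right-hand term at each step yields, for any sampling set $X$ with $|X|=n$,
\[ I(\bs y_X; f^m) \;\leq\; \sum_{i=1}^{m} I(\bs h_{i,X}+\bs\epsilon;\, h_i). \]
Because the same $X$ is a feasible design for every component GP, maximizing term-by-term on the right bounds the $X$-maximum on the left, and each per-component maximum is by definition $\gamma_n(l_i)$. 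This gives the intermediate inequality $\gamma_n^{m} \leq \sum_{i=1}^{m} \gamma_n(l_i)$ announced just before the corollary.

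Finally, I would invoke Lemma~\ref{lemma: parmi} to bound $\gamma_n(l_i) \in O\!\left(l_i^{-2}(\log n)^3\right)$ for each $i$, and summing delivers the claimed order
\[ \gamma_n^{m} \;\in\; O\Big(\sum_{i=1}^{m} l_i^{-2}(\log n)^3\Big). \]

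The main obstacle is executing the induction carefully. One must verify that Lemma~\ref{lemma: compmi} applies when the ``second GP'' is itself a sum, which is immediate because a finite sum of independent GPs is a GP whose kernel is the sum of the component kernels; and one must note explicitly that the inequality $\max_X \sum_i I_i(X) \leq \sum_i \max_X I_i(X)$ is what lets the common-design sum be replaced by the sum of single-fidelity maxima. Both steps are routine, but they are the places where the argument can easily be stated too loosely.
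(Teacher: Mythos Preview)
Your proposal is correct and follows essentially the same route as the paper: the paper simply asserts, just before the corollary, that Lemma~\ref{lemma: compmi} together with the model~\eqref{def: mfgp} gives $\gamma_n^{m} \leq \sum_{i=1}^{m} \gamma_n(l_i)$, and then invokes Lemma~\ref{lemma: parmi}. Your induction on the number of summands and the explicit $\max_X \sum_i \le \sum_i \max_X$ step are precisely the details the paper leaves implicit, so there is no substantive difference in approach.
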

This corollary gives us an insight on the size of $\gamma_{n}^{m}$ at different fidelity level. 
 %Since $l_1 \geq l_2\ldots \geq l_M $, 
 It follows that $\gamma_n^{(m)}$ grows faster at higher fidelity levels. 
 
% \subsection{Information Gain \& Uncertain Reduction}
% In this step, 
We now derive a bound on the posterior variance for the multi-fidelity GP in terms of the maximum mutual information gain. 

\begin{lemma}[\bit{Uncertainty reduction for multi-fidelity GPs}]	\label{lemma: vub}
	Let $f \sim GP\left(\mu_0(\boldsymbol{x}), k_0(\boldsymbol{x},\boldsymbol{x}')\right)$ and $\sigma_0^2(\boldsymbol{x}) \leq \sigma^2 $, for each $\boldsymbol{x} \in D$. An additive sampling noise $\epsilon \sim N(0, s^2)$ is incurred every time $f$ is accessed. Under the greedy sampling policy the posterior variance after $n$ sampling rounds satisfies
	\begin{equation*}
	\max_{\boldsymbol{x}\in D} \sigma_n^2 (\boldsymbol{x}) \leq \frac{2 \sigma^2 }{\log \left( 1 + s^{-2}\sigma^2 \right)} \frac{\gamma_n}{n}.
	\end{equation*}
\end{lemma}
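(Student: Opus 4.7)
The plan is to combine the closed-form mutual information identity~\eqref{eq:mutual-info} with two pointwise properties of the greedy rule, and then to lower-bound each summand using concavity of $\log(1+\cdot)$. This will allow me to bound $\max_{\bs x} \sigma_n^2(\bs x)$ from above by the mutual information divided by $n$ (times a constant), and then use $I(\bs y_{X_n};f)\leq \gamma_n$.

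First, I would record the following two facts that together drive the bound. (a) Posterior variance is monotone non-increasing in the number of observations, i.e.\ $\sigma_{i}^2(\bs x)\leq \sigma_{i-1}^2(\bs x)$ for every $\bs x \in D$ and every $i$; this is a standard GP conditioning fact and is a consequence of~\eqref{eq:multi-GP-inference} written in the single-fidelity form. (b) By the greedy selection rule~\eqref{def: greedy}, $\sigma_{i-1}^2(\bs x_i) = \max_{\bs x\in D}\sigma_{i-1}^2(\bs x)$. Together they give the key inequality $\sigma_{i-1}^2(\bs x_i)\geq \max_{\bs x \in D}\sigma_n^2(\bs x)$ for every $i\in\{1,\dots,n\}$, since the greedy choice at step $i$ dominates the maximizer of $\sigma_n^2$ evaluated at step $i-1$, which in turn dominates its later value at step $n$ by monotonicity.

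Next, I would apply the elementary inequality $\log(1+x)\geq \frac{\log(1+a)}{a}\,x$ valid for $x\in[0,a]$ (because $x\mapsto \log(1+x)/x$ is decreasing on $(0,\infty)$). Taking $a=s^{-2}\sigma^2$ and $x=s^{-2}\sigma_{i-1}^2(\bs x_i)$, which is legitimate since $\sigma_{i-1}^2(\bs x_i)\leq \sigma_0^2(\bs x_i)\leq \sigma^2$, each term in~\eqref{eq:mutual-info} satisfies
\[
\log\bigl(1+s^{-2}\sigma_{i-1}^2(\bs x_i)\bigr) \;\geq\; \frac{\log(1+s^{-2}\sigma^2)}{\sigma^2}\,\sigma_{i-1}^2(\bs x_i).
\]

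Finally, I would chain everything: substituting the last display into~\eqref{eq:mutual-info}, applying the key inequality $\sigma_{i-1}^2(\bs x_i)\geq \max_{\bs x\in D}\sigma_n^2(\bs x)$ to replace every summand by the target quantity, and using $I(\bs y_{X_n};f)\leq \gamma_n$ (by definition of $\gamma_n$ as the maximum over size-$n$ sample sets), yields
\[
\gamma_n \;\geq\; \frac{\log(1+s^{-2}\sigma^2)}{2\sigma^2}\;n\;\max_{\bs x\in D}\sigma_n^2(\bs x),
\]
which rearranges to the claimed bound. The only mildly delicate step is (a)-(b) combined, since one must be careful to argue $\sigma_{i-1}^2(\bs x_i)\geq \max_{\bs x}\sigma_n^2(\bs x)$ and not merely $\sigma_{i-1}^2(\bs x_i)\geq \sigma_{n-1}^2(\bs x_n)$; the rest is direct algebra with the concavity inequality. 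I do not expect any real obstacle, as this mirrors the Srinivas--Krause argument for single-fidelity GPs and the multi-fidelity structure enters only implicitly through the eventual choice of $\gamma_n^{m}$ via Corollary~\ref{corollary: mfmi}.
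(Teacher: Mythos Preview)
Your proof is correct and follows essentially the same route as the paper: monotonicity of the posterior variance plus the greedy rule to control $\sigma_{i-1}^2(\bs x_i)$, the concavity inequality that $\log(1+x)/x$ is decreasing (equivalently, as the paper phrases it, $t/\log(1+t)$ is increasing), and the bound $I(\bs y_{X_n};f)\le \gamma_n$. The only cosmetic difference is ordering: the paper first shows $\sigma_{i-1}^2(\bs x_i)$ is non-increasing in $i$ and hence bounds the smallest (last) summand by the average $2\gamma_n/n$, whereas you lower-bound every summand by $\max_{\bs x\in D}\sigma_n^2(\bs x)$ directly; both yield the identical constant.
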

\begin{proof}
	For any $\boldsymbol{x} \in D$, $\sigma_{n}^2(\boldsymbol{x})$ is monotonically non-increasing in $n$. So we get
	\begin{equation}\label{ineq: variance}
		\max_{\boldsymbol{x}\in D} \sigma_{n}^2(\boldsymbol{x}) = \sigma_{n}^2(\boldsymbol{x}_{n+1}) \leq \sigma_{n-1}^2(\boldsymbol{x}_{n+1})\leq \sigma_{n-1}^2(\boldsymbol{x}_n),
	\end{equation}
% 	{\color{red}
% 		\begin{equation}\label{ineq: variance}
% 		\max_{\boldsymbol{x}\in D} \sigma_{n}^2(\boldsymbol{x}) \leq \max_{\bs x \in D}\sigma_{n-1}^2(\boldsymbol{x}) = \sigma_{n-1}^2(\boldsymbol{x}_n),
% 	\end{equation}
% 	}
	where the second inequality is due to the fact $\boldsymbol{x}_n = \argmax_{\boldsymbol{x}\in D}\sigma_{n-1}^2(\boldsymbol{x})$. Again since $\boldsymbol{x}_{n+1} = \argmax_{\boldsymbol{x}\in D}\sigma_{n}^2(\boldsymbol{x})$, inequality \eqref{ineq: variance} also indicates that $\sigma_{n-1}^2(\boldsymbol{x}_n)$ is monotonically non-increasing. Hence, from~\eqref{eq:mutual-info}, $\log \left( 1 + s^{-2}\sigma_{n-1}^2\left(\boldsymbol{x}_n\right) \right) \leq 2 \subscr{I}{greedy} \left(\boldsymbol{y}_X ; f \right)/n \leq 2\gamma_n/n$.
	Since ${s^2}/{\log \left(1+s^2\right)}$ is an increasing function on $[0,\infty)$,
	$${\sigma_{n-1}^2\left(\boldsymbol{x}_n\right)} \leq \frac{\sigma^2}{\log \left( 1 + s^{-2}\sigma^2 \right)} {\log \left( 1 + s^{-2}\sigma_{n-1}^2\left(\boldsymbol{x}_n\right) \right)}.$$
	Substituting~\eqref{ineq: variance} into it, we conclude that
	\begin{align*}
	\max_{\boldsymbol{x}\in D} \sigma_{n}^2(\boldsymbol{x})\leq  {\sigma_{n-1}^2\left(\boldsymbol{x}_n\right)} \leq \frac{2 \sigma^2 }{\log \left( 1 + s^{-2} \sigma^2  \right)} \frac{\gamma_n}{n}.
	\end{align*}
\end{proof}

Lemma~\ref{lemma: vub} indicates that the smaller and the more slowly growing $\gamma_n$ is, the faster $\max_{\boldsymbol{x}\in D} \sigma_{n}(\boldsymbol{x})$ converges.  This result explains our idea of using multi-fidelity model.

\subsection{Analysis of Expected Detection Time}
%\begin{lemma}\label{lemma: parmi}
%	Let the area of interest $D \in \real^2$ has diameter $d(D)$. Suppose the kernel function $k(\boldsymbol{x},\boldsymbol{x}') = v^2 \bar{k}(\|{\boldsymbol{x}-\boldsymbol{x}'}\|/l)$ and sampling $f$ incurs an additive Gaussian noise $\epsilon \sim N(0,s^2)$. The maximum mutual information
%	$$\gamma_T(D, l, v/s) \in O\left(c\left(v/s\right) d^2(D) l^{-2}\phi(T,U_1)\right),$$
%	where $c(v/s) = \lim_{T \rightarrow \infty} \phi(Tv^2/s^2,U_1)/\phi(T,U_1)$ is a monotonically nondecreasing function of $v/s$.
%\end{lemma}
%\smallskip
%Since we the same $D$ is used at all fidelities, we simply write $\gamma_T(l, v/s) \in O(c\left(v/s\right) l^{-2}\phi(T,U_1))$ by omitting the role of $D$ in lemma~\ref{lemma: parmi}. 
%\subsubsection
%\noindent
%{\bf Step 4: Convergence of uncertainty:}
We now derive an upper bound on the number of samples needed to classify a location using EMTS algorithm and then use this result to compute the total sampling and travel time required for classification. 

\begin{lemma}[\bit{Sample complexity for uncertainty reduction}]\label{lemma: converge}
	In the autoregressive multi-fidelity model~\eqref{def: stker}, if each $h^{(m)}$ has a squared exponential kernel, then
	\[
	\min \setdef{n \in \natural}{\max_{\boldsymbol{x}\in D} \sigma_n (\boldsymbol{x})\leq \Delta} \in O \left(\frac{\sigma_0^2}{\Delta^2} \left(\ln \frac{\sigma_0}{\Delta}\right)^3 \right).
	\]
\end{lemma}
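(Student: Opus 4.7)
The plan is to combine the uncertainty-reduction bound (Lemma~\ref{lemma: vub}) with the multi-fidelity information-gain bound (Corollary~\ref{corollary: mfmi}), and then invert the resulting implicit inequality in $n$. The key observation is that $\gamma_n$ grows only polylogarithmically in $n$, so $\max_{\bs x}\sigma_n^2(\bs x) = O((\log n)^3/n)$, and solving for the sample size that drives this below $\Delta^2$ yields the claimed bound up to the usual log-of-log correction absorbed into the big-O constant.

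First I would apply Lemma~\ref{lemma: vub} to the highest-fidelity process $f$, using the uniform prior-variance bound $\sigma_0^2(\bs x) \leq \sigma_0^2$. Combined with Corollary~\ref{corollary: mfmi}, which yields $\gamma_n \in O((\log n)^3)$ for the multi-fidelity GP with squared exponential kernels (the length-scale factors $\sum_{i} l_i^{-2}$ and the noise-variance constant $2\sigma_0^2/\log(1+s^{-2}\sigma_0^2)$ being absorbed into a constant $C_1$), this gives
\[
\max_{\bs x \in D} \sigma_n^2(\bs x) \leq \frac{C_1 \sigma_0^2 (\log n)^3}{n}.
\]
Consequently $\max_{\bs x \in D}\sigma_n(\bs x) \leq \Delta$ is guaranteed as soon as $n$ satisfies the implicit inequality $n \geq C_1 \sigma_0^2 (\log n)^3/\Delta^2$.

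Next I would invert this implicit inequality by the standard self-consistent substitution $n = K(\sigma_0^2/\Delta^2)(\ln(\sigma_0/\Delta))^3$ with $K$ a constant to be chosen. Taking logarithms,
\[
\ln n = \ln K + 2\ln(\sigma_0/\Delta) + 3\ln\ln(\sigma_0/\Delta),
\]
so for $\sigma_0/\Delta$ bounded away from $1$ we have $\ln n \leq C_2 \ln(\sigma_0/\Delta)$ for some constant $C_2$. Cubing and substituting back, the right-hand side of the implicit inequality is bounded by $C_1 C_2^3 (\sigma_0^2/\Delta^2)(\ln(\sigma_0/\Delta))^3$, so choosing $K \geq C_1 C_2^3$ makes the inequality hold and establishes the order bound.

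The main obstacle is the inversion of the implicit inequality: the $(\ln(\sigma_0/\Delta))^3$ factor in the statement arises precisely from this step, and one must verify that the $\ln\ln$ correction in $\ln n$ is lower-order and can be absorbed into the constant. A secondary subtlety is that Lemma~\ref{lemma: vub} is stated for a single GP with one noise variance, while EMTS collects samples at multiple fidelities with different noise variances $s_m^2$; this can be handled by bounding $s^{-2}$ by $\min_m s_m^{-2}$ (or by carrying the fidelity-dependent constants through) and using that the greedy maximum-variance selection rule~\eqref{def: greedy} is applied within each fidelity level, so the monotonicity argument underlying Lemma~\ref{lemma: vub} carries over with only notational changes.
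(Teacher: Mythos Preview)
Your proposal is correct and follows essentially the same route as the paper: invoke Lemma~\ref{lemma: vub} together with Corollary~\ref{corollary: mfmi} to obtain $\max_{\bs x}\sigma_n^2(\bs x)\in O(\sigma_0^2(\log n)^3/n)$, then invert. The paper dismisses the inversion step with ``the lemma follows by inspection,'' whereas you actually carry out the self-consistent substitution and verify that the $\ln\ln$ term is lower order; your discussion of the multi-fidelity noise constants is also more careful than the paper's one-line remark that the $v_m,s_m,l_m$ are finite.
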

\begin{proof}
It follows from Lemma~\ref{lemma: vub} that 
	\begin{align*}
	\frac{n}{\gamma_n}\leq \frac{2 \sigma_0^2 }{\max_{\boldsymbol{x}\in D} \sigma_{n}^2(\boldsymbol{x})}.
	\end{align*}
	Since $v_m, s_m$ and $l_m$ for all fidelity levels are finite, it follows from Corollary~\ref{corollary: mfmi} that $\gamma_n \in O((\ln n)^3)$. Combining these results, the lemma follows by inspection. 
\end{proof}

% To evaluate the efficiency of EMTS, we aims to give a upper bound for the expected detection time. Both sampling time and traveling time are considered. 
% We now use Lemma~\ref{lemma: converge} to bound the expected number of samples required by EMTS to classify a point $\boldsymbol{x}$ within a given misclassification tolerance $\delta$. 
\begin{lemma}[\bit{Sample complexity for EMTS}]\label{lemma: sampling rounds}
	For a given misclassification tolerance $\delta$, let $n(\boldsymbol{x}, \delta)$ be the number of samples required to classify $\boldsymbol{x} \in D$. Then, the expected number of samples satisfies	
	\[\expt [n(\boldsymbol{x}, \delta) \,|\, \Delta(\boldsymbol{x})] \in O \left(\varphi(\Delta(\boldsymbol{x}), \delta) \left(\ln \varphi(\Delta(\boldsymbol{x}), \delta)\right)^3 \right), \]
	where  $\Delta(\boldsymbol{x}) =\abs{f(\boldsymbol{x})-\texttt{th}}$ and $\varphi(\Delta(\boldsymbol{x}), \delta) = \frac{\sigma_0^2}{ \Delta^2(\boldsymbol{x})} {\ln \left(\frac{3\sigma_0}{\delta \Delta(\boldsymbol{x})} \right)}$.	
\end{lemma}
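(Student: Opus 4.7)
The plan is to track how many epochs it takes before $\bs x$ is forced to be classified, and to translate that into a bound on the cumulative number of samples using the uncertainty-reduction analysis already in place. Throughout, condition on $\Delta(\bs x) = |f(\bs x)-\texttt{th}|$; since the variance update is deterministic in the sampling locations (not in the measured values), the sample count per epoch is itself deterministic, and the only randomness entering classification at epoch $j$ is whether $f(\bs x)$ lies inside the shrinking confidence interval.

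First I would define, for each epoch $j$, the ``good event'' $G_j = \{f(\bs x) \in [L_{n_j}(\bs x,\delta/2^j), U_{n_j}(\bs x,\delta/2^j)]\}$, which by Lemma~\ref{lemma: conineq} has probability at least $1-\delta/2^{j-1}$. On $G_j$, if $\bs x$ is still unclassified at epoch $j$, both bounds straddle $\texttt{th}$, and since $f(\bs x)$ lies in the interval the width must exceed $\Delta(\bs x)$, i.e., $2 c(\delta/2^j)\sigma_{n_j}(\bs x) \geq \Delta(\bs x)$. Combining this with the epoch-wise uncertainty contraction $\max_{\bs x} \sigma_{n_j}(\bs x) \leq (3/4)^j \sigma_0$ built into the sampling planner, I obtain a sufficient condition for classification on $G_j$: namely $2c(\delta/2^j)(3/4)^j \sigma_0 \leq \Delta(\bs x)$. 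Solving this inequality for $j$ using $c(\delta/2^j) = \sqrt{2\ln(2^j/\delta)}$ gives a threshold epoch $j^\star = O\!\big(\log(\sigma_0/\Delta(\bs x)) + \log\log(1/\delta)\big)$ at which the good event guarantees classification.

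Next I would convert $j^\star$ into a bound on $n_{j^\star}$. Since the sampling planner reduces $\max_{\bs x}\sigma_n(\bs x)$ to at most $(3/4)^{j^\star}\sigma_0$, and this target uncertainty is precisely $\Delta(\bs x)/(2 c(\delta/2^{j^\star}))$, Lemma~\ref{lemma: converge} supplies the bound
\[
n_{j^\star} \in O\!\left(\frac{\sigma_0^2}{\Delta^2(\bs x)}\, c^2(\delta/2^{j^\star})\,\Big(\ln\frac{c(\delta/2^{j^\star})\sigma_0}{\Delta(\bs x)}\Big)^{\!3}\right).
\]
Substituting $c^2(\delta/2^{j^\star}) = 2\ln(2^{j^\star}/\delta) = O(\ln(\sigma_0/(\delta \Delta(\bs x))))$ and absorbing the double-logarithm inside the cube yields exactly $O\!\big(\varphi(\Delta(\bs x),\delta)(\ln \varphi(\Delta(\bs x),\delta))^3\big)$, the target rate.

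Finally I would handle the contribution of the tail events $\{J>j\}$ for $j>j^\star$, where $J$ is the classification epoch. Writing $\expt[n(\bs x,\delta)\mid \Delta(\bs x)] = \sum_{j\geq 1}(n_j-n_{j-1})\prob(J\geq j)$, the initial block $j\leq j^\star$ contributes $n_{j^\star}$, which is already the desired order. For $j>j^\star$, the event $\{J\geq j\}$ is contained in $G_{j-1}^c$, so $\prob(J\geq j)\leq \delta/2^{j-2}$. Using the Lemma~\ref{lemma: converge} bound $n_j = O((16/9)^j j^3)$, the tail sum behaves like $\sum_{j>j^\star} (8/9)^j j^3$, which is geometrically summable and of smaller order than $n_{j^\star}$. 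Adding the two contributions gives the claim. The main obstacle I expect is the careful bookkeeping in step two: the confidence multiplier $c(\delta/2^j)$ and the contraction factor $(3/4)^j$ interact inside a transcendental equation for $j^\star$, and one must verify that the dominant $(\log \varphi)^3$ factor really absorbs the $\log\log(1/\delta)$ terms produced by $c(\delta/2^{j^\star})$ before one can cleanly package the bound as $O(\varphi(\ln\varphi)^3)$.
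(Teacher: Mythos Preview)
Your proposal is correct and follows essentially the same route as the paper: identify a threshold epoch $j^\star$ (the paper's $J$) at which the deterministic width $2c(\delta/2^j)(3/4)^j\sigma_0$ drops below $\Delta(\boldsymbol{x})$, convert $j^\star$ to a sample count via Lemma~\ref{lemma: converge}, and control the tail using the $(8/9)^j$ geometric decay coming from $n_j\in\tilde O((16/9)^j)$ against $\prob(G_j^c)\le \delta/2^{j-1}$. Your summation-by-parts decomposition $\expt[n]=\sum_j (n_j-n_{j-1})\prob(J\ge j)$ is in fact more careful than the paper's printed argument, which omits the $(n_j-n_{j-1})$ weights on the indicator sum; your version is the intended computation.
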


\begin{proof}	
	Since $\delta<1/2$, function $c(\delta/2^j) \left(3/4\right)^{j+1}$ is monotonically decreasing for $j\geq 2$.
	%	so the confidence region will  shrinks starting from the $2$nd epoch,  
	%	
	%	For epochs $j$, the size of the confidence region $U (\boldsymbol{x},{\delta}/{2^j})-L (\boldsymbol{x},{\delta}/{2^j}) = 2 c(\delta/2^j) \left(3/4\right)^{j+1}\sigma_0$. 
	%	
	We define
	\[J = \ceil{ \log_{4/3} \left(\frac{3 \sigma_0}{ \Delta(\boldsymbol{x})} \sqrt{2\ln \left(\frac{3\sigma_0}{\delta \Delta(\boldsymbol{x})} \right)} \right) } + 1.\]	 
It can be shown that the choice of $J$ ensures, for $j \ge J$,
% 	Since $J \geq 2$, for $j\geq J$,
	\begin{align}
	U (\boldsymbol{x}, &{\delta}/{2^j}) - L (\boldsymbol{x},{\delta}/{2^j}) \nonumber\\
	\leq & \; 2 c(\delta/2^j) \left(3/4\right)^{j+1}\sigma_0 \leq 2 c(\delta/2^J) \left(3/4\right)^{J+1}\sigma_0 \nonumber\\
	\leq & \; \frac{\Delta(\boldsymbol{x})}{2} \sqrt{\frac{{ {\alpha} \ln \left( \frac{3\sigma}{\delta\Delta(\boldsymbol{x})} \sqrt{2\ln \frac{3\sigma}{\delta \Delta(\boldsymbol{x}) }}\right) }}{\ln \frac{3\sigma}{\delta \Delta(\boldsymbol{x})}} } < {\Delta(\boldsymbol{x})} \label{region}
	\end{align}
	where $\alpha = \log_{4/3} 2$ and the second inequality is due to the fact $\ln(x\ln(x))/\ln(x) \leq (1+e)/e$. For a point $\boldsymbol{x}$ at which $c^*(\boldsymbol{x}) = 1$ and $\Delta(\boldsymbol{x}) > 0$, based on~\eqref{region}, the number of sampling rounds to classify $\boldsymbol{x}$ satisfies
	\begin{align*}
	n(\boldsymbol{x},\delta) &\leq n_J + \sum_{j=J+1}^{\infty} \indicator{L (\boldsymbol{x},{\delta}/{2^j} )< \texttt{th} \leq U(\boldsymbol{x},{\delta}/{2^j} ) } \\
	&\leq n_J + \sum_{j=J+1}^{\infty} \indicator{L (\boldsymbol{x},{\delta}/{2^j} )< \texttt{th} } \\
	&\leq n_J + \sum_{j=J+1}^{\infty} \indicator{U (\boldsymbol{x},{\delta}/{2^j} ) < f(\boldsymbol{x}) },
	\end{align*}
	where $n_J$ is the number of samples collected in the first $J$ epochs. Then the expected sampling rounds can be bounded as
	\begin{align*}
	\bar n (\boldsymbol{x},\delta) &\leq n_J + \sum_{j=J+1}^{\infty} \prob\left( L (\boldsymbol{x},{\delta}/{2^j}) \geq \texttt{th} \right) \\
	&\leq n_J + \sum_{j=J+1}^{\infty} \prob\left( L (\boldsymbol{x},{\delta}/{2^j}) \geq \texttt{th} \right)\\
	& \leq n_J + \sum_{j = 1}^{\infty} \frac{n_j}{2^j}.
	\end{align*}
	%\[
	%
	%\]
	From Lemma~\ref{lemma: converge}, we has $n_i \in \tilde O((16/9)^j )$. Therefore $\sum_{j = 1}^{\infty} {n_j}/{2^j}$ is finite. So we conclude
	\[\bar n (\boldsymbol{x},\delta) \in O \left(\varphi(\Delta(\boldsymbol{x}), \delta) \left(\ln \varphi(\Delta(\boldsymbol{x}), \delta)\right)^3 \right) . \]
	
\end{proof}

\begin{remark}(\bit{Comparison with sample complexity of multiarmed bandits:})
	Notice that $\expt [n(\boldsymbol{x}, \delta) \,|\, \Delta(\boldsymbol{x})] \in \tilde{O} \left(\frac{1}{ \Delta^2(\boldsymbol{x})}\right)$ describes the complexity to of classification of $\boldsymbol{x}$, i.e., for a point with $f(\boldsymbol{x})$ close to $\texttt{th}$ more time is needed. This term is similar to the sampling complexity~\cite{mannor2004sample} in a pure-exploration multi-armed bandit problem. This result is based on the assumption that GPs all have squared exponential kernel. For kernels characterizing less correlations, e.g. Mart\'en kernels, more sampling rounds are expected. \oprocend
\end{remark}

We now derive an upper-bound on expected detection time for EMTS.

\begin{theorem}[\bit{Expected classification time for EMTS}]~\label{theorem: est}
	For a location $\boldsymbol{x} \in D$ and misclassification tolerance $\delta$, the expected classification time for $\boldsymbol{x}$ satisfies
	\[\bar t(\boldsymbol{x},\delta) \, \in O\left(\varphi(\Delta(\boldsymbol{x}), \delta) \left(\ln \varphi(\Delta(\boldsymbol{x}), \delta)\right)^3 \right),\] 
	where $\varphi(\Delta(\boldsymbol{x}), \delta) = \frac{\sigma_0^2}{ \Delta^2(\boldsymbol{x})} {\ln \left(\frac{3\sigma_0}{\delta \Delta(\boldsymbol{x})} \right)}$.
\end{theorem}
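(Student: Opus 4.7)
The plan is to decompose the classification time $t(\bs x, \delta)$ into (i) the cumulative sample-acquisition time and (ii) the cumulative travel time, then argue that sampling time dominates, so the theorem follows directly from the sample-count bound of Lemma~\ref{lemma: sampling rounds}.

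First, I would observe that each sample has a fixed acquisition duration $\tau_s$, giving a sampling-time contribution of $\tau_s\, n(\bs x, \delta)$. Next, I would bound the travel time within each epoch: in epoch $j$ the vehicle follows an approximate Euclidean TSP tour (computed, e.g., by Concorde) through the $n_{j+1}-n_j$ new sampling points, all of which lie in the bounded planar domain $D$. Invoking Few's worst-case bound for Euclidean TSP tours through any $k$ points in a bounded region of $\real^2$, the tour length (hence the travel time at unit speed) is $O(\sqrt{n_{j+1}-n_j})$. When the epoch splits across fidelity levels, up to $M$ sub-tours are concatenated, and since $M$ is a fixed constant this affects only the constant prefactor. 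Each epoch transition contributes at most $\mathrm{diam}(D)$ of additional travel.

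To aggregate over epochs I would invoke the geometric bound $n_j \in \tilde O\big((16/9)^j\big)$ established inside the proof of Lemma~\ref{lemma: sampling rounds}, which implies $n_{j+1}-n_j = \Theta(n_{j+1})$. Summing the per-epoch travel bounds over the $J \in O(\log n(\bs x, \delta))$ epochs preceding classification yields, by geometric summation, $\sum_{j=1}^J O(\sqrt{n_{j+1}-n_j}) = O(\sqrt{n_J})$, and the epoch-transition overhead contributes an additional $O(\log n(\bs x, \delta))$. Combining,
$$t(\bs x, \delta) \;\leq\; \tau_s\, n(\bs x, \delta) \,+\, O\big(\sqrt{n(\bs x, \delta)}\big) \,+\, O\big(\log n(\bs x, \delta)\big) \;=\; O\big(n(\bs x, \delta)\big).$$
Taking expectations and applying Lemma~\ref{lemma: sampling rounds} would then yield the stated bound $\bar t(\bs x, \delta) \in O\big(\varphi(\Delta(\bs x),\delta)(\ln \varphi(\Delta(\bs x),\delta))^3\big)$.

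The main obstacle I anticipate is justifying the $O(\sqrt{k})$ TSP-tour bound without stochastic assumptions on the points: the Beardwood--Halton--Hammersley result addresses uniformly random points, whereas EMTS places sampling locations adaptively via the greedy rule~\eqref{def: greedy}. Replacing BHH with Few's deterministic worst-case bound (valid for arbitrary $k$ points in any bounded planar region) sidesteps this issue and preserves the $O(\sqrt{k})$ scaling. A secondary, minor care point is ensuring that regions removed from the search space by region-elimination do not invalidate the per-epoch bound; but since elimination only shrinks the set in which sampling points lie, the Few bound applies with a non-increasing effective diameter, and the argument goes through unchanged.
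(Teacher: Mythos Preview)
Your proposal is correct and follows essentially the same approach as the paper: decompose into sampling time plus travel time, bound the latter by a deterministic worst-case Euclidean TSP length of order $\sqrt{n}$ so that it is $o(\bar n(\bs x,\delta))$, and then invoke Lemma~\ref{lemma: sampling rounds}. The paper's proof is terser---it cites the Karloff bound $0.984\sqrt{2n}+11$ (playing the same role as your Few bound) and applies it directly to $\bar n(\bs x,\delta)$ without writing out the per-epoch geometric summation---but your more explicit epoch-by-epoch accounting and your remark distinguishing Few/Karloff from BHH are sound clarifications rather than a different route.
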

\smallskip
\begin{proof}
Since we assume unit sampling time, the total sampling time is in the same order as $n(\boldsymbol{x}, \delta)$. Then we consider the traveling time spent in order to collected those samples. Since EMTS requires the vehicle to search from low fidelity level to high fidelity level, the total number of altitude switches is no greater than $M-1$. As presented in~\cite{karloff1989long}, for $n$ points in $[0,1]^2$, the length of the shortest TSP Tour $< 0.984\sqrt{2 n} +11$. Therefore, the expected traveling time belongs to  $O\big(d\sqrt{\bar n (\boldsymbol{x},\delta)}\big)$, where $d$ is the diameter of $D$.
Thus, the expected traveling time belongs to $o(\bar n (\boldsymbol{x},\delta))$. Considering both sampling and traveling time, we conclude
\[\bar t(\boldsymbol{x},\delta) \in  O \left(\varphi(\Delta(\boldsymbol{x}), \delta) \left(\ln \varphi(\Delta(\boldsymbol{x}), \delta)\right)^3 \right). \]
\end{proof}
Theorem~\ref{theorem: est} illustrates the efficiency of the EMTS algorithm, we conjecture it to be near optimal. It has a natural implication that the expected classification time at a location increases with the classification complexity and the desired classification accuracy. 

% is higher at locations with higher  or a more accurate detection result is desired.

\section{Conclusions and Future Directions}\label{sec:conclusions}
In this paper, we extended the classical informative path planning approach for single-fidelity GPs to multi-fidelity GPs. This novel extension allowed for jointly planning for sampling locations and associated fidelity-levels, and thus, addresses the fidelity-coverage trade-off. We proposed and analyzed the EMTS algorithm for multi-target search that yields sampling points that the robot should visit and the fidelity level with which the robot should collect the information at these points. We illustrated our algorithm in an underwater victim search scenario using the Unmanned Underwater Vehicle Simulator. We rigorously analyzed the algorithm in terms of its accuracy in classifying the locations in the environment as empty or occupied by a target, as well as the expected time the robot takes to classify these points. 

Future research include the extension to cooperative multi-robot search scenarios and implementation of the proposed algorithm in our underwater multi-target search testbed.

\footnotesize 

\bibliographystyle{IEEEtran}
\bibliography{IEEEabrv,bandits,surveillance,mybib}

\end{document}